\documentclass[11pt]{article}
\usepackage{epsfig} \usepackage{latexsym,bbm}
\usepackage{xspace}
\usepackage{color,fancybox,graphicx,subfigure,fullpage}
\usepackage[top=1in, bottom=1in, left=1in, right=1in]{geometry}
\usepackage{tabularx} 
\usepackage{pdfsync}
\usepackage{enumitem}
\usepackage{microtype}
\usepackage{graphicx}
\usepackage{subfigure}
\usepackage{booktabs}
\usepackage{amsfonts,nicefrac,amsmath,amssymb,xcolor,amsthm,mathtools}
\usepackage{algorithm}
\usepackage{algorithmic}
\usepackage{graphicx}
\usepackage{hyperref}
\usepackage{framed}
\usepackage{float}
\usepackage{multirow}
\usepackage{booktabs}
\usepackage[outdir=./]{epstopdf}
\usepackage{flushend}
\usepackage{olo}
\setlength{\tabcolsep}{2pt}
\newtheorem{theorem}{Theorem}

\newtheorem{lemma}{Lemma}

\renewcommand{\epsilon}{\varepsilon}
\usepackage{olo}
\newcommand{\RAN}{\textsc{Ran}\xspace}

\newcommand{\FAIR}{\textsc{Fair-OFUL}\xspace}
\newcommand{\NAIVE}{\textsc{Naive}\xspace }
\newcommand{\EPS}{\textsc{Fair-EPS}\xspace }
\newcommand{\OPT}{\textsc{Opt}\xspace}
\newcommand{\UNC}{\textsc{Unc}\xspace}
\newcommand{\foful}{\textsc{$L_1$-OFUL}\xspace}
\newcommand{\oful}{\textsc{OFUL}\xspace}
\newcommand{\cbone}{\textsc{Confidence-Ball$_1$}\xspace}
\newcommand{\cbtwo}{\textsc{Confidence-Ball$_2$}\xspace}
\newcommand{\epsgreedy}{\textsc{Constrained-$\varepsilon$-Greedy}\xspace}

\title{\bf An Algorithmic Framework to Control Bias in Bandit-based Personalization\footnote{A short version of this paper appeared in the FAT/ML 2017 workshop (\url{https://arxiv.org/abs/1707.02260})}}

\usepackage{authblk}

\author[1]{L. Elisa Celis}
\author[2]{Sayash Kapoor}
\author[3]{Farnood Salehi}
\author[4]{Nisheeth K. Vishnoi}
\affil[1,2,3,4]{\small \'{E}cole Polytechnique F\'{e}d\'{e}rale de Lausanne (EPFL), Switzerland}
\affil[2]{\small Indian Institute of Technology, Kanpur}

\begin{document}

\maketitle
 
\begin{abstract}
Personalization is pervasive in the online space as it leads to higher efficiency and revenue by allowing the most relevant content to be served to each user. However, recent studies suggest that personalization methods can propagate societal or systemic biases and polarize opinions; this has led to calls for regulatory mechanisms and algorithms to combat bias and inequality. Algorithmically, bandit optimization has enjoyed great success in learning user preferences and personalizing content or feeds accordingly. We propose an algorithmic framework that allows for the possibility to control bias or discrimination in such bandit-based personalization. Our model allows for the specification of general fairness constraints on the sensitive types of the content that can be displayed to a user. The challenge, however, is to come up with a scalable and low regret algorithm  for the constrained optimization problem that arises. Our main technical contribution is a provably  fast and low-regret algorithm for the fairness-constrained bandit optimization problem. Our proofs crucially leverage the special structure of our problem. Experiments on synthetic and real-world data sets show that our algorithmic framework can control bias with only a minor loss to revenue.
\end{abstract}
\newpage

\section{Introduction}
Content selection algorithms take data and other information as input, and -- given a user's properties and past behavior -- produce a personalized list of content to display  \cite{goldfarb2011online,liu2010personalized}.
This personalization leads to higher utility and efficiency both for the platform, which can increase revenue by selling targeted advertisements, and also for the user, who sees content more directly related to their interests \cite{Forbes2017,farahat2012effective}.
However, it is now known that such personalization may result in propagating or even creating biases that can influence decisions and opinions. 
Recently, field studies have shown that user opinions about political candidates can be manipulated by personalized rankings of search results \cite{Epstein2015}. 
Concerns have also been raised about gender and racial inequality in serving personalized advertising  \cite{datta2015automated, sweeney2013discrimination, farahat2012effective}.
Just in the US, over two-thirds of adults consume news online on social media sites  \cite{Mitchell2015};
 the  impact of  how social media personalizes content is immense.

One approach to eliminate such biases would be to hide certain user properties so that they cannot be used for personalization; 
however, this could come at a loss to the utility for both the user and the platform --  the content displayed would be less relevant and result in decreased attention from the user and less revenue for the platform 
(see, e.g., \cite{sakulkar2016stochastic}). 
\emph{Can we design personalization algorithms that allow us to be fair without a significant compromise in their utility?}

Here we focus on bandit-based personalization and introduce a rigorous algorithmic approach to this problem.
For concreteness we describe our approach for personalized news feeds, however it also applies to other personalization settings.
Here, content has different {\em types} such as news stories that lean republican vs. democrat, ads for high-paying or low-paying jobs, and is classified into {\em groups}; often based on a single type or a combination of a small number of  types or \emph{sensitive attributes}.
Users can also have different types/contexts and, hence, different preferences over groups, but for simplicity here we focus on the case of a single user.
Current personalization algorithms, at every time-step, select a piece of content for the user,\footnote{In order to create a complete feed, content is simply selected repeatedly to fill the screen as the user scrolls down. The formalization does not change and hence, for clarity, we describe the process of selecting a single piece of content.} and feedback is obtained in the form of whether they click on, purchase or hover over the item.
The goal of the content selection algorithm is to select content for each user in order to maximize the positive feedback (and hence revenue) received.
As this optimal selection is a-priori unknown, the process {is often}  modeled as an online learning problem in which a probability distribution (from which one selects content) is maintained and updated according to the feedback \cite{pandey2006handling}.
As such, as the content selection algorithms learn more about a user, the corresponding probability distributions tend to become sparse (i.e., concentrate the mass on a small subset of entries); the hypothesis is that this is what leads to extreme personalization in which content feeds skew entirely to a single type of content.
To counter this, we introduce a notion of \emph{online group fairness}, in which we require that the probability distribution from which content is sampled  satisfies certain fairness constraints {\em at all time steps}; this  ensures that the probability vectors do not become sparse (or specialize) to a \emph{single} group and thus the content shown to different types of users is not sparse across groups.
Subsequently, we present our model where we do not fix a notion of fairness, as this could depend on the application; instead, our framework allows for the specification of  types or sensitive attributes,  groups, and fairness constraints (in a similar spirit as \cite{yang2016measuring}). 
Unlike previous work, our model can capture general class of constraints; we give a few important examples showing how it can  incorporate prevalent discrimination metrics.
At the same time, the constraints remain linear and allow us to leverage the bandit optimization framework.
While there are several polynomial time algorithms for this setting, the challenge is to come up with a {\em practical} algorithm for this constrained bandit setting. 
Our main technical contribution is to show how an adaptation of an existing algorithm for the unconstrained bandit setting, along with the special structure of our constraints, can lead to a scalable algorithm (with provable guarantees) for the resulting computational problem  of { maximizing revenue while satisfying fairness constraints}.
Finally, we experimentally evaluate our model and algorithms on both synthetic and real-world data sets. 
{Our algorithms approach the theoretical optimum in the constrained setting on both synthetic and real-world data sets and \epsgreedy is very fast in practice.
We study how guaranteeing a fixed amount of fairness with respect to standard fairness metrics (e.g., {\em Risk Difference}) affects revenue on the YOW Dataset \cite{zhang2005bayesian} and a synthetic dataset.
For instance, we observe that for ensuring that the risk difference is less than $1-x$ for $x < \nicefrac{1}{2}$, our algorithms lose roughly $20x\%$ in revenue.
Similarly, to satisfy the 80\% rule often used in regal rulings \cite{DisparateImpactBook}, we lose less than 5\% in revenue in our synthetic experiments.
Our results  show that ensuring fairness is not necessarily at odds with maximizing revenue.}

\begin{figure}[t]
	\begin{center}
		\includegraphics[width=0.6\linewidth]{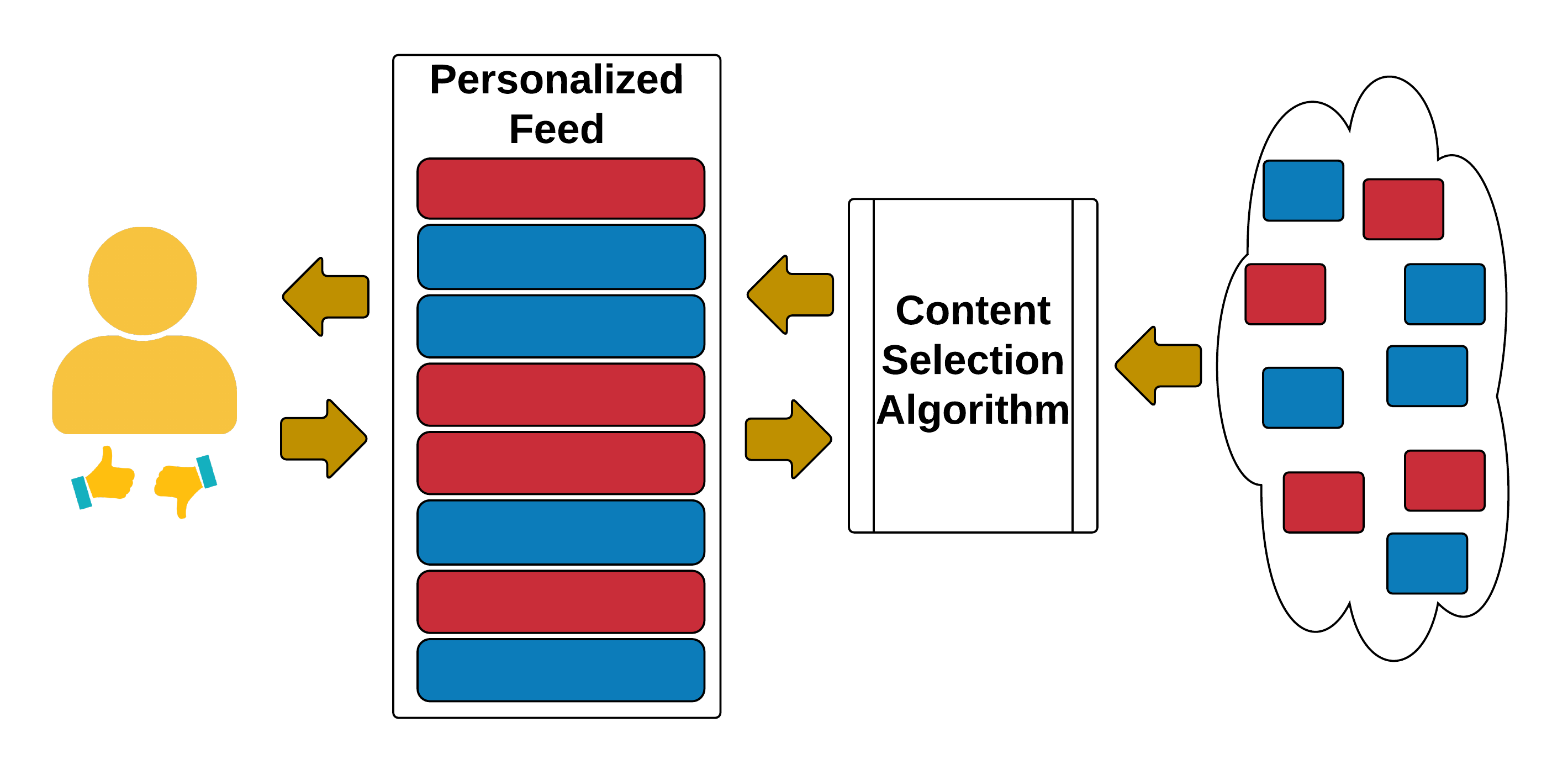}
		\caption{{The content selection algorithm decides what to show to the user 
				and the feedback received from similar users. Different colors represent different types of content, e.g., news stories that lean republican vs. democrat. 
				Feedback could be past likes, purchases or follows.}}
		\label{fig:setting}
	\end{center}
\end{figure}

\begin{figure}
	\centering
	\begin{tabular}{cc}
		\includegraphics[width=0.5\columnwidth]{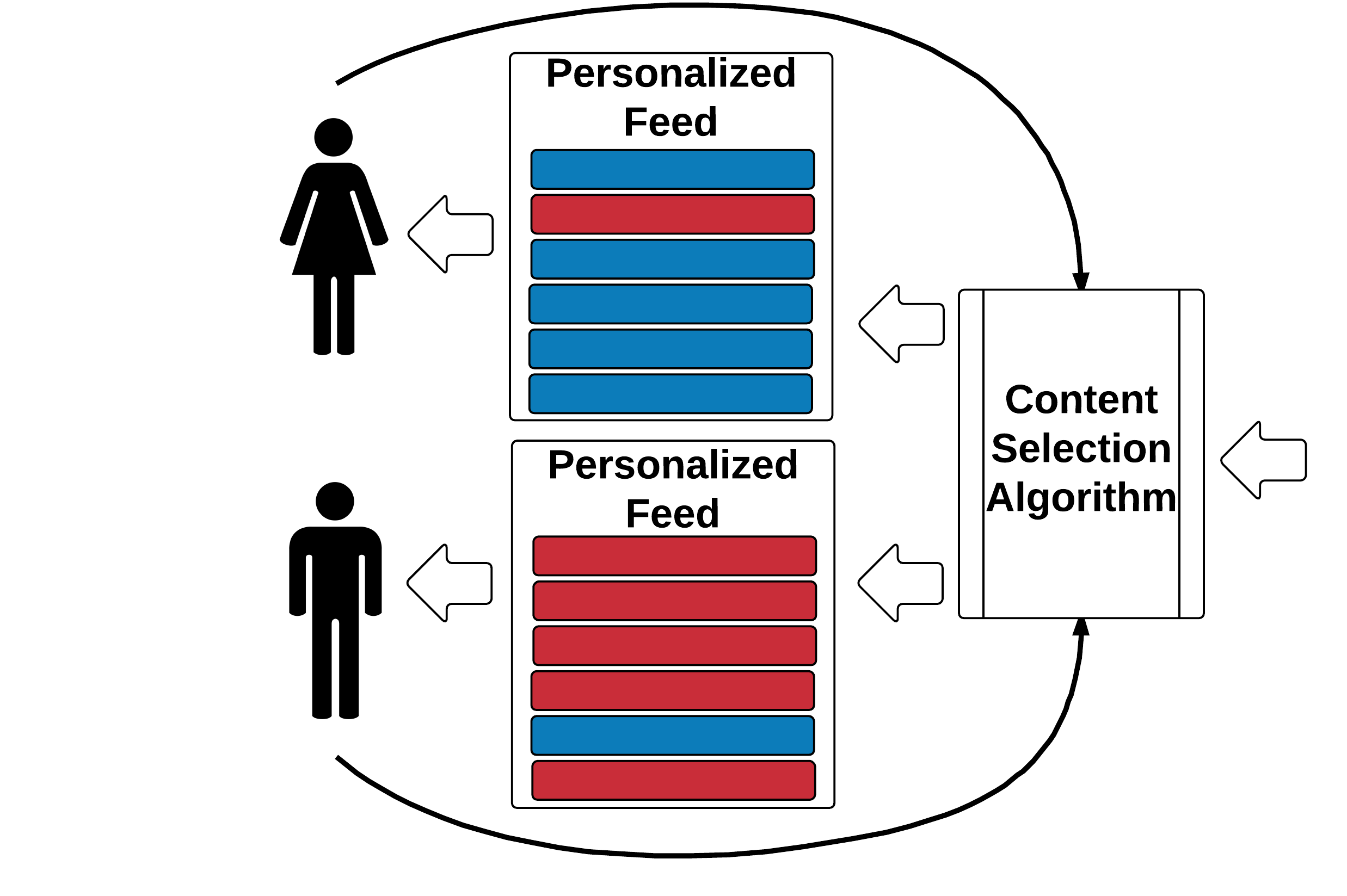}&
		\label{fig:bias}
		\includegraphics[width=0.5\columnwidth]{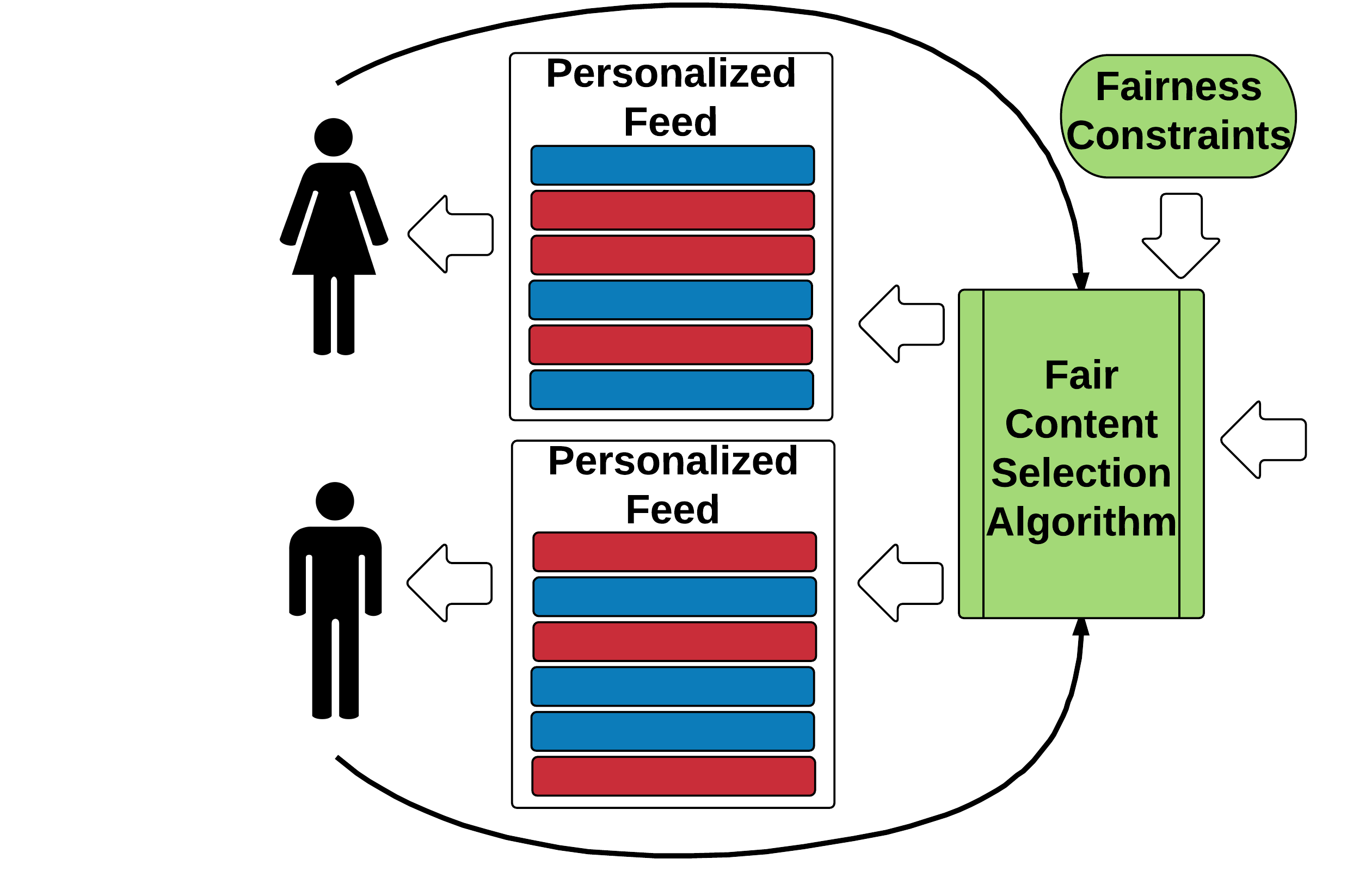}\\
		\label{fig:fair}
		(a)&(b) 	\\
	\end{tabular}
	\caption{ (a) Existing algorithms can perpetuate systemic bias by presenting different types of content to different types of users. (b) Our proposed solution satisfies the fairness constraints and does not allow extreme differences in the types of content presented to different users. Different colors in the content represent different groups, e.g., ads for high vs low paying jobs. Our fair content selection algorithm does not permit extreme biases while personalizing content.}
	\label{fig:bias_fair}
\end{figure}

\section{\bf Bandit Optimization and Personalization}
Algorithms for the general problem of {displaying content} to users largely fall within the multi-armed bandit framework, and \emph{stochastic contextual bandits} in particular \cite{BanditBook}. 
At each time step $t = 1, \ldots, T$, a user views a {page} (e.g., Facebook, Twitter or Google News), the user's  \emph{context} $s^t$ (that lies in a set $\mathcal S$) is given as input, and one {piece of content} (or \emph{arm}) $a^t \in [k]$ must be selected to be displayed. 
A random \emph{reward} $r_{a^t, s^t}$ (hereafter denoted by $r_{a,s}^t$ for readability), which depends on both the given user context and the type of the selected {content} is then received. 
{This reward captures resulting clicks, purchases, or time spent viewing the given content and depends not only on the type of user $s$ (e.g., men may be more likely to click on a sports article) but also on the content $a$ itself (e.g., some news articles have higher quality or appeal than others).}

More formally, at each time step $t$, a sample 
$(s^t, r_{1,s}^t, \ldots, r_{k,s}^t)$ 
is drawn from an unknown distribution $\mathcal D$, the context $s^t \in \mathcal S$ is revealed, the player (the content selection algorithm in this case) selects an arm $a \in [k]$ and receives reward $r_{a,s}^t \in [0,1]$.
As is standard in the literature, we assume that the $r_{a,s}$s are drawn independently across $a$ and $t$ (and not necessarily $s$).  
The rewards $r_{a^\prime,s^\prime}$ for any $a^\prime \neq a$ and $s^\prime \neq s$ are assumed to be {\em unknown} -- indeed, there is no way to observe what a user's actions {would have been} had a different {piece of content} been displayed, or what a different user would have done.

The algorithm computes a probability distribution $p^{t}$ over the {arms} based on the previous observations $(s^1, a^1, r_{a,s}^1), \ldots, (s^{t-1},a^{t-1},r_{a,s}^{t-1})$ and the current user type $s^{t}$, and then selects {arm} $a^t \sim p^t$; as $p^t$ depends on the context $s^t$, we often write $p^t(s^t)$ for clarity. 
The goal is to select $p^t(s^t)$s in order to maximize the cumulative rewards, and 
the efficacy of such an algorithm is measured with respect to how well it minimizes \emph{regret} -- the difference between the algorithm's reward and the reward obtained from the (unknown) optimal policy.
Formally, let $f: \mathcal S \to [k]$ be a mapping from contexts to {arms}, and let $f^\star := \arg\max_{f} \mathbb E_{(s,\vec{r})\sim\mathcal D} [r_{f(s),s}];$ i.e., $f^\star$ is the policy that selects the best arm in expectation for each context. 
Then, the regret is defined as
$  \mathsf{Regret}_T := T \cdot \mathbb E_{(s,\vec{r})\sim\mathcal D} [r_{f^\star(s),s}] - \sum_{t=1}^T r_{a,s}^t. $
Note that the regret is a random variable as $a^t$ depends not only the draws from $p^t$, but also on the realized \emph{history} of samples $ \{(s^t, a^t, r_{a,s}^t)\}_{t=1}^T$. 

\section{Our Model}
We would like a model that can guarantee fairness with respect to sensitive attributes of content in the bandit framework used for personalization.
Guaranteeing such {\em group fairness} would involve controlling disproportionate representation  across the sensitive attributes. 
Towards defining our notion of group fairness,  
 let $G_1,$ $\ldots,$ $G_g \subseteq [k]$ be $g$ \emph{groups} of {content}. 
{For instance the $G_i$s could be a partition (e.g., ``republican-leaning'' news articles, ``democratic-leaning'', and ``neutral'').
	An important feature of bandit algorithms, which ends up being the root cause of  bias,  is that the probability distribution converges to the action with the best expected reward for each context; i.e., the entire probability mass in each context ends up on a single group. 
	This leads to an effect where different users may be shown very different ad groups
	and can be problematic when it leads to observed outcomes such as only showing minimum-wage jobs to disenfranchised populations.} 
In Section \ref{sec:metrics-app}, we show that many metrics for discrimination or bias  boil down to quantifying how different these probability distributions across groups can be. 
Thus, finding a mechanism to control these probability distributions would  ensure fairness with respect to many metrics. 

\begin{table}
\centering
	\begin{tabular}{ |c|c|c| } 
		 \hline
		 \textbf{Algorithm} & \textbf{Per iteration Running time} & \textbf{Regret Bound}\\ 
		 \hline
		 \cbtwo \cite{dani2008stochastic} & NP-Hard problem  & $O\left(\frac{k^2}{\gamma} \log^3 T\right)$ \\ 
		 \hline
		 \oful \cite{YA2011} & NP-Hard problem  & $\tilde{O}\left(\frac{1}{\gamma}\left(k^2+\log^2T\right)\right)$\\ 
		 \hline
		 \cbone \cite{dani2008stochastic} & $O\left(k^\omega\right) + 2k$ LP-s & $O\left(\frac{k^3}{\gamma} \log^3 T\right)$\\ 
		 \hline
		 \foful (Algorithm \ref{algo:foful}) & $O\left(k^\omega\right) + 2k$ LP-s & $\tilde{O}\left(\frac{k}{\gamma}\left(k^2+\log^2T\right)\right)$ \\
		 \hline
		  \epsgreedy (Algorithm \ref{algo:epsgreedy}) & $O\left(1\right) + 1$ LP  & $O\left(\frac{k}{\gamma^2} \log T\right)$  \\ 
		 \hline
	\end{tabular}
	\caption{The complexity and problem-dependent regret bounds for various algorithms when the decision set is a polytope. }
\label{table:comparison}
\end{table}
This motivates our definition of \emph{group fairness constraints}.
For each group $G_i$, let $\ell_i$ be a lower bound and $u_i$ be an upper bound  on the amount of probability mass that the content selection algorithm can place on this group. Formally, we impose 
the following constraints:
\begin{equation}
	\label{eq:fair}
	\ell_i \leq \sum_{a \in G_i} p^t_a(s) \leq   u_i \;\;\;\; \forall i \in [g], \forall t \in [T], \forall s \in \mathcal S.
\end{equation}
The bounds $\ell_i$s and $u_i$s provide a handle with which we can ensure that the probability mass placed on any given group is neither too high nor too low at each time step.

Rather than fixing the values of $u_i$s and $\ell_i$s, we allow them to be specified as input.
This allows one to control the extent of group fairness depending on the application, and hence (indirectly) encode bounds on a wide variety of existing fairness metrics.
This typically involves translating the fairness metric parameters into concrete values of $\ell_i$s and $u_i$s; see  Section \ref{sec:metrics-app} for several examples.
For instance, given $\beta>0$, by setting $u_i$s and $\ell_i$s such that  $u_i-\ell_i\leq \beta$ for all $i$, we can ensure that, what is referred to as, the {\em risk difference} is bounded by $\beta$. 
An additional feature of our model is that no matter what the group structures, or the lower and upper bounds are, the constraints are always convex.

Importantly, note that unlike ignoring user contexts entirely, the constraints still allow for personalization \emph{across} groups. 
For instance, if the groups are republican (R) vs democrat (D) articles, and the user contexts are known republicans (r) or democrats (d), we may require that $p^t_{\mbox{R}}(\cdot) \leq 0.75$ and $p^t_{\mbox{D}}(\cdot) \leq 0.75$ for all $t$. 
This ensures that extreme polarization cannot occur -- at least 25\% of the articles a republican is presented with will be democrat-leaning.
Despite these constraints, personalization at the group level can still occur, e.g., by letting $p_{\mbox{R}}^t(\mbox{r}) = 0.75$ and $p_{\mbox{R}}^t(\mbox{d}) = 0.25$.
Furthermore, this framework allows for complete personalization \emph{within} a group; e.g., the republican-leaning articles shown to republicans and democrats may differ. 
This is crucial as the utility maximizing republican articles for a republican may differ than the utility maximizing republican articles for a democrat.  
To the best of our knowledge, such fairness constraints are novel for personalization. 
Here the constraints allow us to addresses the concerns illustrated in our motivating examples. 

The next question we address is how to  measure an algorithm's performance against the best \emph{fair} solution.
\footnote{The unconstrained regret may be arbitrarily bad, e.g., if $u_i = \varepsilon \ll 1$ for the group $i$ that contains the arm with the best reward.} 
We say that a probability distribution $p$ on $[k]$ is \emph{fair} if it satisfies the upper and lower bound constraints in \eqref{eq:fair}, and let $\mathcal C$ be the set of all such probability distributions. Note that given the linear nature of the constraints, the set $\cC$ is a polytope.
Let $\mathcal B$ be the set of functions $g: \mathcal S \to [0,1]^k$ such that $g(s) \in \mathcal C$; i.e., all $g \in \mathcal B$ satisfy the fairness constraints.  
Further, we let 
$g^\star := \arg\max_{g \in \mathcal B} \mathbb E_{(s,\vec{r})\sim\mathcal D} [r_{g(s),s}];$ i.e., $g^\star$ is the policy that selects the best arm in expectation for each context. 
An algorithm is said to be fair if it only selects $p^t(s^t) \in \mathcal C$.
Thus, the \emph{fair regret} for such an algorithm can be defined as
$  \mathsf{FairRegret}_T := T \cdot \mathbb E_{(s,\vec{r})\sim\mathcal D} [r_{g^\star(s),s}] - \sum_{t=1}^T r_{a^t,s^t}.$

\section{Other Related Work}
Studies considering notions of group fairness such as {statistical parity},  {disparate impact}, and others mentioned above (see, e.g., \cite{kamiran2009classifying, feldman2015certifying}), apply these metrics the \emph{offline} problem; in our setting this would correspond to enforcing $p^T(s)$ to be roughly the same for all $s$, but would leave the intermediary $p^t(s)$ for $t < T$ unrestricted. 
A subtle point is that most notions of (offline) group fairness primarily consider the selection or classification of groups of \emph{users}; in our context, while the goal is still fairness towards users, the selection is over \emph{content}. 
However, as the  constraints necessary to attain fairness remain on the selection process, we use the terminology  \emph{online group fairness} to highlight these parallels.

In a completely different bandit setting, a recent work \cite{joseph2016fairness} defined a notion of \emph{online individual fairness}  which, in our language, restricts $p^t$s so that all \emph{arms} are treated equally by only allowing the probability of one arm to be more than another if we are reasonably certain that it is better than the other.
When the arms correspond to \emph{users} and not content such individual fairness is indeed important, but for the personalized setting the requirement is both too strong (we are only concerned with \emph{groups} of content) and too weak (it still allows for convergence to different groups for different contexts). 
Other constrained bandit settings that encode \emph{global} knapsack-like constraints (and locally place no restriction on $p^t$) have also been considered; see, e.g., \cite{agrawal2016linear}. 
Our two fair bandit algorithms  build on the calssic work of \cite{auer2002finite} and works on linear bandit optimization over convex sets \cite{dani2008stochastic,YA2011}.

\section{Algorithmic Results}

We present two  algorithms that attempt to minimize regret in the presence of  fairness constraints.
Recall that our fairness constraints are linear and the reward function linear.

For each arm $a \in [k]$ and each context $s \in \mathcal{S}$, let its mean reward be $\mu^\star_{a,s}$. 
Our algorithms do not assume any relation between different contexts and, hence, function independently for each context.  
Thus, we describe them for the case of a single fixed context.
In this case, the unknown parameters are the expectations of each arm $\mu^\star_a$ for $a \in [k]$.
We assume that the reward for the $t$-th time step is sampled from a Bernoulli distribution with probability of success $\mu^\star_{a^t}.$ 
In fact, the reward can be sampled from any bounded distribution with the above mean; we explain this further in Section \ref{sec:proofs_fofulreg}.
For this setting, we present two different algorithms,  \foful (Algorithm \ref{algo:foful}) and \epsgreedy (Algorithm \ref{algo:epsgreedy}): the first has a better regret, and the second a better running time. The latter fact, as we discuss a bit later, is  due to the special structure arising in  our model when compared to the general linear bandit setting.

\begin{theorem}
	Given the description of $\mathcal{C}$ and the sequence of rewards,  the \foful algorithm, run for $T$ iterations,  has the following fair regret bound for each context $s \in \cS$:
	$\mathbb{E}\left[{\sf FairRegret}_T\right]=O\left(\frac{k}{\gamma} \left(\log^2T + k\log T + k^2\log\log T \right)\right),$
	where the expectation is taken over the histories and $a^t \sim p^t$, and $\gamma$ depends on $(\mu^\star_{a})_{a \in [k]}$ and $\mathcal{C}$ as defined in \eqref{eq:gamma}.
	\label{theorem:fofulreg}
\end{theorem}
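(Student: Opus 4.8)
The plan is to reduce the fairness-constrained bandit problem to an instance of linear bandit optimization over the polytope $\mathcal{C}$, and then invoke (an adapted version of) the \oful regret bound from \cite{YA2011}. Observe that once we fix a context $s$, the expected reward of playing a distribution $p \in \mathcal{C}$ is the linear function $\langle p, \mu^\star \rangle$, where $\mu^\star = (\mu^\star_a)_{a\in[k]}$ is the unknown parameter vector; thus selecting $p^t \in \mathcal{C}$ at each step and observing $r^t_{a^t}$ with $a^t \sim p^t$ is exactly the stochastic linear bandit setup whose decision set is the polytope $\mathcal{C}$ and whose noise is bounded (indeed sub-Gaussian, since rewards lie in $[0,1]$). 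The fair benchmark $g^\star(s)$ is, for the single-context restriction, just the maximizer $p^\star := \arg\max_{p\in\mathcal{C}} \langle p, \mu^\star\rangle$, so ${\sf FairRegret}_T = \sum_{t=1}^T \langle p^\star - p^t, \mu^\star\rangle + (\text{zero-mean martingale terms from } a^t \sim p^t)$, and the martingale part contributes $O(\sqrt{T})$ in expectation, which is dominated by the claimed bound.

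Next I would track the problem-dependent (gap) structure that produces the $\gamma$-dependence rather than a $\sqrt{T}$ bound. The quantity $\gamma$ defined in \eqref{eq:gamma} should be the suboptimality gap between $p^\star$ and the best vertex (or face) of $\mathcal{C}$ that is not optimal; since $\mathcal{C}$ is a polytope, the optimal face is attained and every suboptimal extreme point loses at least $\gamma$ in expected reward. Standard gap-based analysis of \oful-type algorithms (self-normalized concentration for the least-squares estimate $\hat\mu^t$ inside an ellipsoidal confidence set $\mathcal{E}^t$, optimism in choosing $p^t = \arg\max_{p\in\mathcal{C}}\max_{\mu\in\mathcal{E}^t}\langle p,\mu\rangle$) gives a per-step instantaneous regret that telescopes against the log-determinant potential $\log\det(V^t)$; dividing the cumulative confidence-width bound by $\gamma$ converts the $\sqrt{T}$-type bound into a $\mathrm{polylog}(T)$ one. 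The confidence-set radius $\beta_t = \tilde O(\sqrt{k\log t} + \sqrt{\log(1/\delta)})$ and the elliptical-potential lemma $\sum_t \min(1,\|p^t\|^2_{(V^t)^{-1}}) = O(k\log T)$ are the two ingredients; combining them yields the three terms $\log^2 T$, $k\log T$, and $k^2 \log\log T$ after the $1/\gamma$ rescaling and a union bound over the $\log T$ doubling epochs (this is where the $\log\log T$ appears).

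The main obstacle — and the reason the bound is $O(k/\gamma \cdot(\cdots))$ rather than the $O(k^2/\gamma \cdot(\cdots))$ one would get from a black-box application of Table \ref{table:comparison}'s \oful row — is exploiting the \emph{special structure} of $\mathcal{C}$: the decision set is not an arbitrary polytope but the simplex intersected with $g$ box-type group constraints, and the ``direction'' vectors $p^t$ always have $\ell_1$-norm $1$. I expect the proof to use this to replace the ellipsoidal $\|\cdot\|_{(V^t)^{-1}}$ confidence width by an $\ell_1/\ell_\infty$-type width (this is the \foful $=$ ``$L_1$-\oful'' idea), so that the relevant potential scales with $k$ rather than $k^2$, mirroring how \cbone improves on \cbtwo's dependence at the cost of solving LPs. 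Concretely, I would (i) set up the coordinate-wise confidence intervals $|\hat\mu^t_a - \mu^\star_a| \le w^t_a$ with $\sum_a$ of the squared widths controlled by a per-arm elliptical potential, (ii) bound instantaneous fair regret at step $t$ by $\langle p^t, w^t\rangle \le \|w^t\|_\infty$ (using $\|p^t\|_1 = 1$), (iii) sum over $t$, split by whether $p^t$ lies on the optimal face, and (iv) on the suboptimal steps divide by $\gamma$ and invoke the potential bound; the bookkeeping of the widths across the $O(\log T)$ epochs, together with the failure-probability union bound, is the delicate part and is where the precise shape of the three-term expression is pinned down. Everything else — linearity of reward, optimality of $p^\star$ on a face, boundedness of noise — is routine given the results already cited.
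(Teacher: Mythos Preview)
Your high-level reduction to linear bandits over the polytope $\mathcal{C}$ and the gap-based $\sum_t R_t \le \sum_t R_t^2/\gamma$ idea are exactly right, and the elliptical-potential lemma plus the self-normalized confidence bound from \cite{YA2011} are indeed the two workhorses. But your account of \emph{where the $k$-dependence comes from} is inverted, and your concrete mechanism (i)--(iv) does not match what \foful actually does.

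First, the $L_1$ confidence ball does not \emph{save} a factor of $k$; it \emph{costs} one. Compare the \oful and \foful rows of Table~\ref{table:comparison}: \foful's regret is worse than \oful's by a multiplicative $k$. The reason is Lemma~\ref{lemma:confidenceset}: the $V_t$-weighted $L_1$ ball $B_t^1$ of radius $\sqrt{k\beta_t}$ contains the $L_2$ ball of radius $\sqrt{\beta_t}$ (Cauchy--Schwarz), so the per-step bound becomes $R_t \le 2\sqrt{k\beta_t\, p^{t\top}V_t^{-1}p^t}$ (Lemma~\ref{lemma:dani8}) with an extra $\sqrt{k}$, hence an extra $k$ after squaring. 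The $L_1$ ball is chosen purely for \emph{computational} reasons---it has $2k$ vertices, so Step~5 reduces to $2k$ LPs---not to tighten regret. Crucially, the proof uses nothing about the simplex/box structure of $\mathcal{C}$ or $\|p^t\|_1=1$; any polytope with positive gap $\gamma$ would do.

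Second, your steps (i)--(ii) describe per-arm confidence intervals and a H\"older bound $\langle p^t,w^t\rangle\le\|w^t\|_\infty$. That is the \epsgreedy analysis, not \foful's: \foful maintains the ridge estimate $\hat\mu_t=V_t^{-1}b_t$ and a $V_t$-weighted ball, not coordinatewise widths, and the actual chain is
\[
\sum_t R_t \;\le\; \frac{1}{\gamma}\sum_t R_t^2 \;\le\; \frac{4k}{\gamma}\sum_t \beta_t\|p^t\|_{V_t^{-1}}^2 \;\le\; \frac{8k\beta_T}{\gamma}\log\det V_T,
\]
followed by the bound on $\frac{\beta_T}{\gamma}\log\det V_T$ from Lemma~\ref{lemma:appendixE}. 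There are no doubling epochs; the $\log\log T$ term arises because $\beta_T$ itself contains $\log\det V_T = O(k\log(1+T/k))$, and this is then squared inside Lemma~\ref{lemma:appendixE}.
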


\begin{theorem} \label{thm:epsgreedy}
	Given the description of $\cC$, a fair probability distribution $q_f \in \left\{q : B_{\infty}(q,\eta) \subset \mathcal{C} \right\}$,
	and the sequence of rewards,  the $\epsgreedy$ algorithm, run for $T$ iterations, has the following fair regret bound for each context $s \in \mathcal{S}$:
	$\mathbb{E}\left[{\sf FairRegret}_T\right]  = \nonumber  O 
	\left(   \frac{\log T}{\eta \gamma^2}   \right),
$	
	where $\epsilon_t = \min\{1,\nicefrac{4}{(\eta d^2t)}\}$ and $d = \min\{\gamma, \nicefrac{1}{2}\}$.\footnote{$B_\infty(q,\eta)$ is an $\ell_\infty$-ball of radius $\eta$ centered at $q$. The algorithm works for any lower bound $L$ on $\gamma$, with a $L$ instead of $\gamma$ in the regret bound.} 
	\label{theorem:epsgreedy}
\end{theorem}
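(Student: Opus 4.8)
The plan is to follow the classical analysis of the decreasing‑$\varepsilon$ greedy rule \cite{auer2002finite}, adapted to the fact that here, at each step, \epsgreedy plays a \emph{distribution} over $[k]$ rather than a single arm. Fix a context $s$ and abbreviate $\mu^\star_a := \mu^\star_{a,s}$ and $g^\star := g^\star(s) \in \arg\max_{p\in\cC}\langle\mu^\star,p\rangle$, a vertex of the polytope $\cC$. The algorithm maintains the running empirical mean vector $\hat\mu^t$ and, at step $t$: with probability $\varepsilon_t$ it \emph{explores}, playing the fixed fair distribution $q_f$; with probability $1-\varepsilon_t$ it \emph{exploits}, playing $\hat p^t \in \arg\max_{p\in\cC}\langle\hat\mu^t,p\rangle$ (the single LP of Table~\ref{table:comparison}). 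Writing $\Delta(p) := \langle\mu^\star, g^\star - p\rangle \in [0,1]$ for the per‑round fair regret of playing $p$ and conditioning on the history, expected fair regret splits into an exploration and an exploitation part (recall $\varepsilon_t$ is deterministic):
\begin{equation*}
\mathbb{E}\left[{\sf FairRegret}_T\right] \;=\; \sum_{t=1}^T \varepsilon_t\,\Delta(q_f) \;+\; \sum_{t=1}^T (1-\varepsilon_t)\,\mathbb{E}\bigl[\Delta(\hat p^t)\bigr].
\end{equation*}
The first sum is at most $\sum_{t=1}^T \varepsilon_t = \sum_{t=1}^T \min\{1,\nicefrac{4}{(\eta d^2 t)}\} \le \tfrac{4}{\eta d^2}(1+\ln T)$, and since $d=\min\{\gamma,\nicefrac{1}{2}\}$ gives $\nicefrac{1}{d^2} = O(\nicefrac{1}{\gamma^2})$, this is already $O\!\left(\frac{\log T}{\eta\gamma^2}\right)$. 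So it remains to show the exploitation sum is no larger in order.

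\noindent The second ingredient is a deterministic lemma relating the LP output to the accuracy of $\hat\mu^t$. Let $\gamma$ be the gap of \eqref{eq:gamma}, i.e.\ essentially the smallest value of $\Delta(v) > 0$ over vertices $v$ of $\cC$. I claim $\|\hat\mu^t - \mu^\star\|_\infty < \gamma/2$ implies $\Delta(\hat p^t) = 0$. Indeed, choosing $\hat p^t$ to be a vertex (LP optima may be taken at vertices) and using $\langle\hat\mu^t, \hat p^t\rangle \ge \langle\hat\mu^t, g^\star\rangle$,
\begin{equation*}
\Delta(\hat p^t) \;=\; \langle\hat\mu^t,\, g^\star - \hat p^t\rangle \;+\; \langle\mu^\star - \hat\mu^t,\, g^\star - \hat p^t\rangle \;\le\; 0 + \|\hat\mu^t-\mu^\star\|_\infty \,\|g^\star-\hat p^t\|_1 \;<\; \tfrac{\gamma}{2}\cdot 2 \;=\; \gamma,
\end{equation*}
so $\hat p^t$ cannot be a suboptimal vertex and hence $\Delta(\hat p^t)=0$. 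Consequently $\mathbb{E}[\Delta(\hat p^t)] \le \Pr[\,\|\hat\mu^t-\mu^\star\|_\infty \ge \gamma/2\,]$, and the exploitation sum is bounded by $\sum_{t\le T}\Pr[\,\|\hat\mu^t-\mu^\star\|_\infty \ge \gamma/2\,]$.

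\noindent The third ingredient is the concentration estimate for that probability. Union‑bounding over arms, it suffices to control $\Pr[\,|\hat\mu^t_a - \mu^\star_a| \ge \gamma/2\,]$ for each $a$. Let $N_a(t)$ be the number of pulls of arm $a$ before step $t$. Since $B_\infty(q_f,\eta)\subset\cC$ forces each coordinate of $q_f$ to exceed its feasible lower limit --- in particular $0$ --- by at least $\eta$, every exploration round pulls arm $a$ with probability at least $\eta$; hence $N_a(t)$ stochastically dominates a sum of independent Bernoulli variables of mean $\eta\sum_{\tau<t}\varepsilon_\tau = \Omega\!\left(\frac{\ln t}{d^2}\right)$ (the bound following from the $\Theta(1/\tau)$ tail of $\varepsilon_\tau$ past the first $O(1/(\eta d^2))$ pure‑exploration rounds, on which $\varepsilon_\tau = 1$). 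A Chernoff bound then gives $N_a(t) = \Omega\!\left(\frac{\ln t}{d^2}\right)$ except with probability decaying polynomially in $1/t$, and, conditioned on $N_a(t)=n$, the $n$ observed rewards of arm $a$ are i.i.d.\ with mean $\mu^\star_a$ and supported on $[0,1]$ (this is where the extension to arbitrary bounded rewards noted in Section~\ref{sec:proofs_fofulreg} enters), so Hoeffding gives $\Pr[\,|\hat\mu^t_a - \mu^\star_a|\ge\gamma/2 \mid N_a(t)=n\,] \le 2e^{-n\gamma^2/2}$. Combining these and using $d\le\gamma$, each $\Pr[\,|\hat\mu^t_a-\mu^\star_a|\ge\gamma/2\,]$ decays polynomially in $1/t$ with exponent at least $2$ (the constant $4$ in $\varepsilon_t$ is chosen precisely for this), so that $\sum_{t\le T}\Pr[\,\|\hat\mu^t-\mu^\star\|_\infty\ge\gamma/2\,]$ is $O(1)$ in $T$ and the exploitation regret is dominated by the $O\!\left(\frac{\log T}{\eta\gamma^2}\right)$ coming from exploration; adding the two sums proves the stated per‑context bound. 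Running the same argument with a known lower bound $L\le\gamma$ in place of $\gamma$ inside $d$ (hence inside $\varepsilon_t$) is verbatim and yields the bound with $L$ replacing $\gamma$, as in the footnote.

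\noindent The main obstacle is this third step: one must lower‑bound the data‑dependent pull counts $N_a(t)$ with high probability (not merely in expectation) and then push the resulting mixed Chernoff/Hoeffding tail through a sum over $t$ carefully enough that the exploitation regret does not dominate --- in its $T$‑, $k$‑, $\eta$‑ and $\gamma$‑dependence --- the exploration term. This is exactly the delicate part of the Auer--Cesa-Bianchi--Fischer analysis, here additionally complicated because exploration samples arms only through the single distribution $q_f$ (so the per‑arm exploration probability is governed by the geometry of $\cC$ via $\eta$) and because the object being learned is a vertex of $\cC$ reached through the LP rather than an individual arm. By comparison, the geometric lemma of the second step is short, and the exploration bound of the first step is immediate.
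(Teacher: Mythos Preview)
Your proposal is essentially the same as the paper's proof: the same exploration/exploitation split, the same H\"older-based lemma that $\|\hat\mu^t-\mu^\star\|_\infty<\gamma/2$ forces $\hat p^t$ to be the optimal vertex, and the same concentration scheme (lower-bound $N_a(t)$ via a Chernoff/Bernstein argument on the exploration pulls through $q_f$, then apply Hoeffding conditionally).

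One small quantitative slip: you assert that $\Pr[\,|\hat\mu^t_a-\mu^\star_a|\ge\gamma/2\,]$ decays like $t^{-c}$ with $c\ge 2$, making the exploitation sum $O(1)$. In fact, with $E_t\approx \tfrac{2}{d^2}\ln(et/n)$ the Hoeffding tail is $(n/(et))^{\gamma^2/d^2}$, and when $\gamma\le \tfrac12$ one has $d=\gamma$ so the exponent is exactly $1$, not $2$. The paper accordingly bounds the exploitation sum by $O\!\bigl(\tfrac{n}{e}\ln T\bigr)=O\!\bigl(\tfrac{\ln T}{\eta d^2}\bigr)$ rather than $O(1)$. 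This does not affect the final order, since exploration already contributes the same $O\!\bigl(\tfrac{\log T}{\eta\gamma^2}\bigr)$, but your parenthetical ``the constant $4$ in $\varepsilon_t$ is chosen precisely for this'' overstates what that constant buys.
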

\noindent
The quantity $\gamma$ is the difference between the maximum and the second maximum of the expected reward with respect to the $\mu^\star$s over the vertices of the polytope $\mathcal{C}$.
Formally, let $V(\mathcal{C})$ denote the set of vertices of $\mathcal{C}$ and 
$v^\star := \arg\max_{v \in V(\mathcal{C})} \sum_{a \in [k]} \mu^\star_a v_a.$ Then,
\begin{equation}\label{eq:gamma}
 \gamma:=  \sum_{a \in [k]} \mu^\star_a v_a^\star - \max_{v \in V(\mathcal{C}) \backslash v^\star} \sum_{a \in [k]} \mu^\star_a v_a. 
\end{equation}
For general convex sets, $\gamma$ can be $0$ and the regret bound can at best only be $\sqrt{T}$ \cite{dani2008stochastic}.
As our fairness constraints result in a $\mathcal{C}$ is a polytope, unless there are degeneracies, $\gamma$ is non-zero. 
In general, $\gamma$ may be hard to estimate theoretically.  
However, for the settings in which we conduct our experiments on, we observe that the value of $\gamma$ is reasonably large.

In traditional algorithms for Multi-Armed bandits, when the probability space is unconstrained, it suffices to solve $\argmax_{i \in [k]} \tilde{\mu}_i$, where $\tilde{\mu}_i$ is an estimate for the mean reward of the $i$-th arm. 
It can be an optimistic estimate for the arm mean in case of the UCB algorithm \cite{auer2002finite}, a sample drawn from the normal distribution with the mean set as the empirical mean for the Thompson Sampling algorithm \cite{agrawal2012analysis} etc.
When the probability distribution is constrained to lie in a polytope $\cC$, instead of a maximum over the arm mean estimates, we need to solve $\argmax_{p \in \cC} \tilde{\mu}^\top p$.
This necessitates the use of a linear program for any algorithm operating in this fashion. 
At every iteration, \foful solves $2k$ LP-s, and \epsgreedy solves one LP. 

\epsgreedy thus offers major improvements in running time over \foful. 
The regret bound of Algorithm \ref{algo:epsgreedy} has better dependence on $k$ and $T$, but is worse by a factor of $\nicefrac{1}{\gamma}$, as compared to Algorithm \ref{algo:foful}; see {Table \ref{table:comparison}}.  
We now give overviews of the both the algorithms. 
The full proofs of Theorems \ref{theorem:fofulreg} and \ref{theorem:epsgreedy} appear in Sections \ref{sec:proofs_fofulreg} and \ref{sec:proofs_epsgreedy} respectively.

\begin{figure}
	\begin{algorithm}[H]
		\caption{ \foful}
		\label{algo:foful}
		\begin{algorithmic}[1]
			{
				\REQUIRE Constraint set $\cC$, maximum failure probabilty $\delta$, an $L_2$-norm bound on $\mu^\star$: $\norm{\mu^\star}_2 \leq \sigma$ and a positive integer $T$
				\STATE Initialize $V_1 := \cI,$  $\hat{\mu}_1 := 0,$ and 
				 $b_1 := 0$
				\FOR{$t = 1, \ldots, T$} 
				\STATE Compute {${\beta_t(\delta)} := \left(\sqrt{2\log\left(\frac{\det(V_t)}{\delta}\right)^\frac{1}{2}} + \sigma \right)^2$}
				\STATE Denote {$B_t^1 := \left\{\mu : \norm{\mu - \hat{\mu}_t}_{1, V_t} \leq \sqrt{k\beta_t(\delta)}\right\}$}
				\STATE Compute {$p^t := \argmax_{p \in \cC} \max_{\mu \in B_t^1} \mu^\top p$}
				\STATE Sample $a$ from the probability distribution $p^t$
				\STATE {Observe reward $r_t = r^t_{a}$}
				\STATE Update {$V_{t+1} := V_t + p^t{p^t}^\top$ }
				\STATE Update {$b_{t+1} := b_t + r_tp^t$}
				\STATE Update {$\hat{\mu}_{t+1} := V_{t+1}^{-1}b_{t+1}$}
				\ENDFOR
			}
		\end{algorithmic}
	\end{algorithm}
\end{figure}
\paragraph{\textsc{$\boldsymbol{L_1}$-OFUL}\xspace.}
At any given time $t$, \foful maintains a regularized least-squares estimate for the optimal reward vector $\mu^\star$, which is denoted by $\hat{\mu}_t$. 
At each time step $t$, the algorithm first constructs a suitable confidence set $B_t^1$ around $\hat{\mu}_t$.
Roughly, the definition of this set ensures that the confidence ball is ``flatter'' in the directions already explored by the algorithm so it has more likelihood of picking a probability vector from unexplored directions.
The algorithm chooses a probability distribution $p^t$ by solving a linear program on each of the $2k$ vertices of this confidence set, and plays an arm $a^t \sim p^t$. 
Recall that for each arm $a\in[k]$, the mean reward is $\mu^\star_a\in[0,1]$. 
The reward for each time step is generated as $r_t\sim\text{Bernoulli}(\mu^\star_{a^t})$, where $a^t \sim p^t$ is the arm the algorithm chooses at the $t^{th}$ time instant. 
The algorithm observes this reward and updates its estimate to $\hat{\mu}_{t+1}$ for the next time-step appropriately.

\foful (Algorithm \ref{algo:foful}) is an adaptation of the \oful algorithm that appeared in \cite{YA2011}.
The key difference is that instead of using a scaled $L_2$-ball in each iteration,  we use a a scaled $L_1$-ball (Step 4 in Algorithm \ref{algo:foful}). 
As we explain below, this makes  Step 5 of our algorithm efficient as opposed to that of \cite{YA2011} where the equivalent step required solving a NP-hard and nonconvex optimization problem.
This idea is similar to how \cbtwo was adapted to \cbone in \cite{dani2008stochastic}.
In particular, our algorithm improves, by a multiplicative factor of $O\left(\log T\right)$, the regret bound of $\left(O\left(\frac{k^3}{\gamma} \log^3 T\right)\right)$ of \cbone in \cite{dani2008stochastic}, see Table \ref{table:comparison}. 

We show that the \foful algorithm can be implemented in time polynomial in $k$ at each iteration.
Apart from Step 4, where we need to find
$\argmax_{p \in \cC}\max_{\mu \in B_t^1}\mu^\top p,
$
the other steps are quite easy to implement efficiently.
For Step 4, we assume oracle access to a linear programming algorithm which can efficiently (in poly($k$) time) compute $\argmax_{p \in \cC}\mu^\top p$. (where $\mu$ is the input to the oracle). 
We first change the order of the maximization, i.e., $\max_{p \in \cC} \max_{\mu \in B_t^1} \mu^\top p = \max_{\mu \in B_t^1} \max_{p \in \cC} \mu^\top p$. 
Using the linear programming oracle, we can solve the inner maximization problem of finding $\max_{p \in \cC} \mu^\top p$ for any given value of $\mu$. 
It is enough to solve this at the $2k$ vertices of $B_t^1$ and take the maximum of these as our value of $\max_{p \in \cC} \max_{\mu \in B_t^1}\mu^\top p$, since one of these $2k$ vertices would be the maximum. 
The value of $p$ corresponding to this maximum value would be the required value $\argmax_{p \in \cC}\max_{\mu \in B_t^1}\mu^\top p$.
Thus, in $2k$ calls to this oracle, we can find the desired probability distribution $p^t$.

Note that in order to find the value of  $\det V_{t+1}$ and $V_{t+1}^{-1}$ in Steps 3 and 10 of the algorithm, we can perform rank-one updates by using  the well-known Sherman-Morrison formula, which bring down the complexity for these steps to $O(k^2)$ from $O(k^3)$, since we already know the value of $\det V_{t}$ and $V_{t}^{-1}$.

\begin{figure}
	\begin{algorithm}[H]
		\caption{ \epsgreedy}
		\label{algo:epsgreedy}
		\begin{algorithmic}[1]
			{
				\REQUIRE Constraint set $\cC$, a fair probability distribution $q_f \in \left\{q : B_{\infty}(q,\eta) \subset \mathcal{C} \right\}$, a positive integer $T$, a constant $L$ that controls the exploration
				\STATE Initialize $\bar{\mu}_1 := 0$
				\FOR{$t = 1, \ldots, T$} 
				\STATE Update $\epsilon_t := \min\{1,\nicefrac{4}{(\eta L^2t)}\}$
				\STATE Compute {$p^t := \argmax_{p \in \cC} \bar{\mu}_t^\top p$}
				\STATE Sample $a$ from the probability distribution $(1-\epsilon_t)p^t + \epsilon_t q_f$
				\STATE {Observe reward $r_t = r^t_{a}$}
				\STATE Update empirical mean {$\bar{\mu}_{t+1}$}
				\ENDFOR
			}
		\end{algorithmic}
	\end{algorithm}
\end{figure}

\paragraph{\textsc{Constrained-$\boldsymbol{\varepsilon}$-Greedy}\xspace.} 
Compared to \foful, instead of maintaining a least-squares estimate of the optimal reward vector $\mu^\star$, {\epsgreedy} maintains an empirical mean estimate of it denoted by $\bar{\mu}_t$.
The algorithm, with probability $1-\eps$ chooses the probability distribution $p^t = \argmax_{p \in \cC}\bar{\mu}^\top p $, and with probability $\eps$ it samples from a feasible fair distribution $q_f \in \mathcal{C}$ in the $\eta$-interior.
It then plays an arm $a^t \sim p^t$. 
The reward for each time step is generated as $r_t\sim\text{Bernoulli}(\mu^\star_{a^t})$, where $a^t \sim p^t$ is the arm the algorithm chooses at the $t^{th}$ time instant. 
The algorithm observes this reward and updates its estimate to $\bar{\mu}_{t+1}$ for the next time-step appropriately.
Maintaining an empirical mean estimate instead of a least-squares estimate, and solving only one linear program instead of $2k$ linear programs at every iteration causes the main decrease in running time compared to \foful.

\epsgreedy is a variant of the classical $\epsilon$-Greedy approach \cite{auer2002finite}. 
Recall that in our setting, an arm is an ad (corner of the $k$-dimensional simplex) and not a vertex of the polytope $\mathcal{C}$.
The polytope $\mathcal{C}$ sits inside this simplex and may have exponentially many vertices.
This is not that case in the setting of \cite{dani2008stochastic,YA2011} -- there may not be any ambient simplex in which their polytope sits, and even if there is, they do not use this additional information about which vertex of the simplex was chosen at each time $t$.
Thus, while they are forced to maintain confidence intervals of rewards for all the points in $\mathcal{C}$, this speciality in our model allows us to get away by maintaining confidence intervals only for the $k$ arms (vertices of the simplex) and then use these intervals to obtain a confidence interval for any point in $\mathcal{C}$.  
Similar to $\epsilon$-Greedy, if we choose each arm enough number of times, we can build a good confidence interval around the mean of the reward for each arm.  
The difference is that instead of converging to the optimal arm, our constraints maintain the point inside $\mathcal{C}$ and it converges to a vertex of $\mathcal{C}$.

To ensure that we choose each arm with high probability, we fix a fair point $q_f \in  \eta$-interior of $\mathcal{C}$ and sample from the point $(1-\eps)p^t + \eps q_f$.
  Then, as in $\epsilon$-Greedy,  we proceed by bounding the regret showing that if the confidence-interval is tight enough, the optimal of LP with true mean $\mu^\star$ and LP with the empirical mean $\bar{\mu}$ does not change. 

\textbf{Solving the LP.} In both \foful and \epsgreedy,  if the ``groups'' in the constraint set form a  partition, one can solve the linear program in $O(k)$ time via a simple greedy algorithm. 
This is because, since each part is separate, the decision of how much probability mass goes to which group can be decided by a simple greedy process and, once that is decided, how to distribute the probability mass within each group can be also done trivially using a greedy strategy. 
This can be extended to {\em laminar} family of constraints and we can solve the LP step in $O(gk)$ time exactly. 
We provide more details in Section \ref{sec:laminar}.
For general group structures, given that the constraints are of packing/covering type, we believe that the  algorithm of \cite{AllenZhuOrecchia} may be useful to obtain a fast and approximate LP solver. 
\begin{figure*}[t!]
	\centering
	\begin{tabular}{ccc}
		{\includegraphics[width=0.45\textwidth]{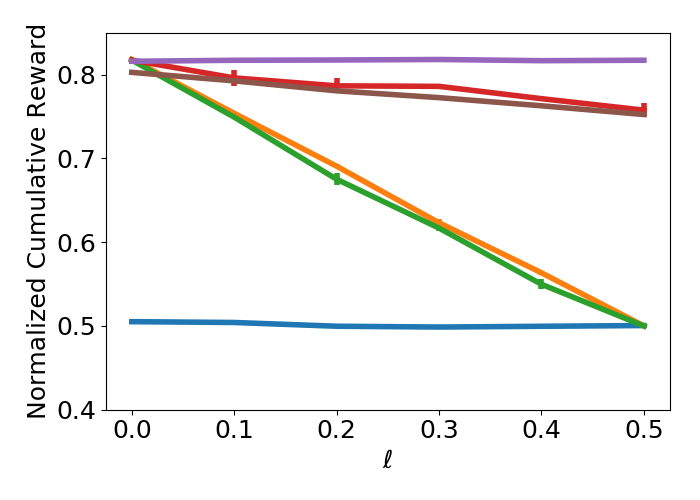}}&
		{\includegraphics[width=0.45\textwidth]{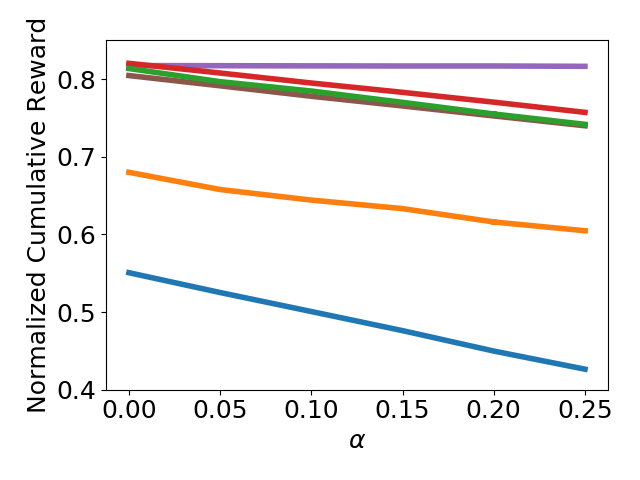}}&{\includegraphics[width=0.1\textwidth,trim = 0cm -4cm 0cm 0cm]{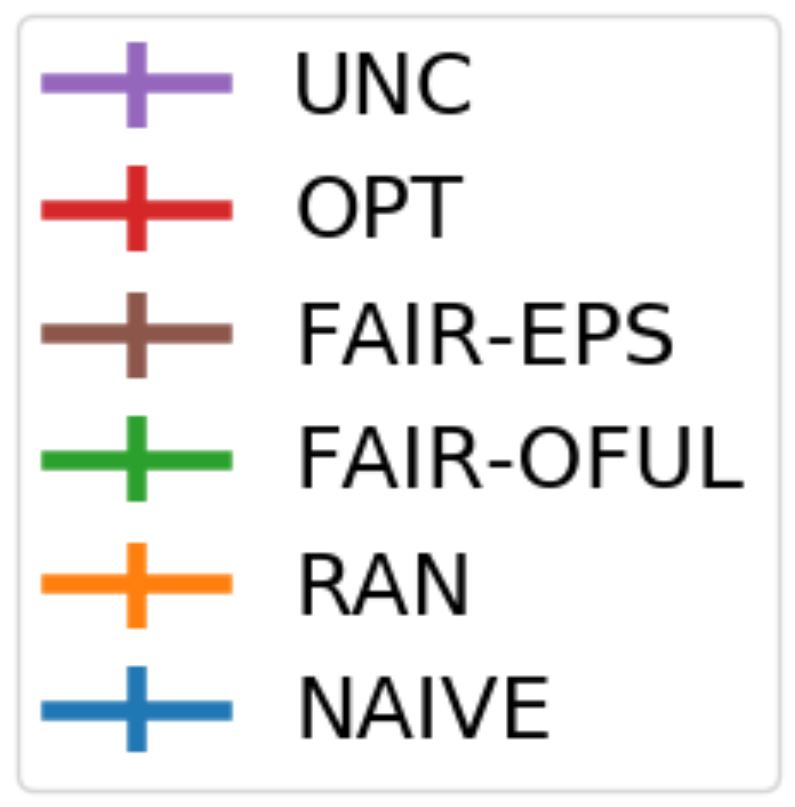}}
		\\ 
		$\qquad \; \;$(a) & $\qquad \;\;$(b)
	\end{tabular}
	\caption{{Empirical results on Synthetic Data.} Depicts the normalized cumulative reward of the algorithms we consider 
		(a) as we vary the lower bound constraints $\ell_1 = \ell_2 = \ell$, and (b) as we vary the reward $\alpha$ lost when presenting content from a group the user does not prefer.
	}
	\label{fig:hm}
\end{figure*}
\section{Empirical Results}
\label{sec:experiments}
In this section we describe experimental evaluations of our algorithms when run on various constraint settings of our model on rewards derived both synthetically and from real-world data.
In each experiment we report the normalized cumulative reward for each of the following algorithms and benchmarks:
\begin{itemize}[leftmargin=*]
	\item {\NAIVE}. As a baseline, we consider a simple algorithm that satisfies the constraints as follows: For each group $i$, with probability $\ell_i$ it selects an arm uniformly at random from $G_i$, then, with any remaining probability, it selects an arm uniformly at random from the entire collection $[k]$ while respecting the upper bound constraints $u_i$.
	
	\item {\UNC}. For comparison, we also depict the performance of the \emph{unconstrained} \foful algorithm (in which we let $\mathcal{C}$ be the set of {\em all} probability distributions over $[k]$).
	
	\item {\RAN}.  At each time step, given the probability distribution $p^t$ specified by the unconstrained \foful algorithm, take the largest $\theta \in [0,1]$ such that selecting an arm with probability $\theta \cdot p^t$ does not violate the fairness constraints. With the remaining probability $(1-\theta)$ it follows the same procedure as in \NAIVE to select an arm at \emph{random} subject to the fairness constraints.
	
	\item {\FAIR}. Our implementation of \foful with the given fairness constraints as input. 

	\item {\EPS}. Our implementation of \epsgreedy with the given fairness constraints as input\footnote{We set $\epsilon_t = \min(1, \nicefrac{10}{t})$. Tuning $\epsilon_t$ might give better results, depending on the dataset used.}.
	
	\item {\OPT}. For comparison, we often depict the performance of the hypothetical \emph{optimal} probability distribution, subject to the fairness constraints, that we could have used had we known the reward vector {$\mu^\star$} for the arms a-priori. Note that this optimal distribution is easy to compute via a simple greedy algorithm; it simply places the most probability mass that satisfies the constraints on the best arm, the most probability mass remaining on the second-best arm subject to the constraints, and so on and so forth until the entire probability mass is exhausted.
\end{itemize}
Note that \FAIR, \RAN, and \UNC  all use \foful as a subroutine; however only \FAIR and \EPS take in the constraints as input, with \RAN satisfying the fairness constraints via a different approach, and \UNC does not satisfy the fairness constraints at all.

\subsection{Experiments on Synthetic Data}

\paragraph{Synthetic Data.} 
We first consider a simple synthetic arm model in order to illustrate the tradeoffs between fairness, rewards, and arm structure. 
We consider 2 groups, each containing 4 arms that give Bernoulli rewards. When a user prefers one group over another, we decrease the rewards of the group they don't like by a fixed value $\alpha$. 
In the experiments, we let the mean of the rewards be $\left[0.28, 0.46, 0.64, 0.82\right]$, and let $\alpha = .1$ unless specified otherwise.\footnote{These values were chosen because they are the expected values of the arm reward, from smallest to largest, of 4 arms are sampled from $\cU[\alpha,1]$ where $\alpha = .1$.}
In each experiment we perform 100 repetitions and report the normalized cumulative reward after 1000 iterations; error bars represent the standard error of the mean.

\paragraph{Effect of Fairness Constraints.}
As there are only two groups, setting a lower bound constraint $\ell_1 = \zeta$ is equivalent to setting an upper bound constraint $u_2 = 1-\zeta$. Hence, it suffices to see the effect as we vary the lower bounds. 
We fix $\alpha = .1$, and vary $\ell_1 = \ell_2 = \ell$ from {0 to .5}; i.e., a completely unconstrained setting to a fully constrained one in which each group has exactly 50\% probability of being selected.

We observe that, even for very small values of $\ell$, the $\FAIR$ and $\EPS$ algorithms significantly outperform \RAN.  Indeed, the performance of the $\FAIR$ and $\EPS$ algorithms is effectively the same as the performance of the (unattainable) hypothetical optimum, and is only worse than the unconstrained (and hence unfair) algorithm by an additive factor of approximately {$\nicefrac{\ell}{10}$}.
\paragraph{Effect of Group Preference Strength.}

An important parameter in the above model is the amount of reward lost when a user is shown items from a group that they do not prefer. 
We fix $\ell = .25$, and vary $\alpha$ (the reward subtracted when choosing an arm from a non-preferred group) from {0 to .25}. 
We note that even for very small values, e.g., $\alpha = .05$, algorithms such as \RAN attain significantly less reward, while $\FAIR$ and $\EPS$ are just slightly worse than the unconstrained (and hence unfair) algorithm. 
As before, we note that $\FAIR$ and $\EPS$ perform almost as well as the unattainable optimum, and are only worse than the unconstrained (and hence unfair) algorithm by an additive factor of approximately {$\nicefrac{\alpha}{4}$}.

\subsection{Experiments on Real-World Data}

\paragraph{Dataset.} We consider the YOW dataset \cite{zhang2005bayesian} which contained data from a collection of 24 paid users who read 5921 unique articles over a 4 week time period. The dataset contains the time at which each user read an article, a [0-5] rating for each article read by each user, and (optional) user-generated categories of articles viewed. We use this data to construct reward distributions for several different contexts (corresponding to different types of users) on a larger set of arms (corresponding to different articles with varying quality) of different types of content (corresponding to groups) that one can expect to see online.

We first created a simple ontology to categorize the 10010 user-generated labels into a total of $g = 7$ groups of content: Science, Entertainment, Business, World, Politics, Sports, and USA. We then removed all articles that did not have a unique label, in addition to any remaining articles that did not fit this ontology. This left us with 3403 articles, each belonging to a single group. We removed all users who did not view at least 100 of these articles; this left 21 users. We think of each user as a single context. We observe that on average there are $k=81$ unique articles in a day, and take this to be the number of ``arms'' in our experiment. The number of articles $k_i$ in group $G_i$ is simply the average number of unique articles observed from that group in a day. 
Lastly, we note that  the articles suggested to users in the original experiment were selected via a recommendation system tailored for each user. We note that some users rarely, if ever, look at certain categories at any point in the 4 weeks; the difference in the \% of User Likes (see {Table~\ref{tab:data}}) suggests that some amount of polarization, either extrinsic or intrinsic, is present in the data. This makes it an interesting dataset to work with, and is, in a sense, a worst-case setup for our experiment -- we take the pessimistic view that presenting content from un-viewed categories gives a user 0 reward (see below), yet we attempt to enforce fairness by presenting these categories anyway.

\paragraph{Experimental Setup.} Let $\rho_a$ be the rating of article $a \in [k]$, given by averaging all of the ratings it received across all users; we consider this to be the underlying quality of an article.
The quality of our arms are determined as follows: for a group $i$ with $|G_i|$ articles, we split the articles into $k_i$ buckets containing the $|G_i|/k_i$ lowest rated articles, 2nd lowest, and so on; call each such bucket $G_i^h$ for $h \in [k_i]$. Then, we let the score of arm $h \in [k_i]$ in group $i$ be 
$\rho_{i,h} = \frac{\sum_{a \in G_i^h} \rho_a}{|G_i^h|}.$
This gives the underlying arm qualities for our simulations. 
To determine the user preferences across groups, 
we first calculate the probability of an article being of category \emph{i}, given that a given user $u$  has read the article; formally, 
$q_i^u = \P{ a \in G_i | \mbox{user $u$ viewed $a$}}.$ 
We let the \emph{average reward} of arm $h$ in group $G_i$ for a user $u$ be $\left({\mu^\star}\right)_{i,h}^u = q_i^u\cdot \rho_{i,h}.$
We normalize the average rewards for each user to lie in [0.1,0.9], and assume that when we select an arm we receive its average reward plus 0-mean noise drawn from a truncated Normal distribution $\overline{\mathcal N}(0,0.05)$ which is truncated such that the rewards always lie within $(0,1)$. 

\begin{table}
	
\centering
	\scalebox{1}{
		\begin{tabular}{ |c|c|c|c|c|c| } 
			\hline
			\textbf{Category} & \textbf{\# Ratings} & \textbf{\# Articles/Day} & \textbf{Avg. Rating} & \textbf{\% Users Like} \\ 
			\hline
			\hline
			Science      & 1708&26&3.64&90.5  \\
			\hline
			Entertainment& 1170&18&3.31&76.2  \\
			\hline
			Business     &  847&12&3.64&90.5 \\
			\hline
			World        &  828&12&3.54&76.2  \\
			\hline
			Politics     &  492&7&3.55&47.6  \\
			\hline
			Sports       &  227&3&3.59&14.3  \\
			\hline
			USA           &  227&3&3.48&28.6  \\
			\hline
			\hline
			\textbf{Total}  & \textbf{5501}&\textbf{81}&\textbf{3.54} &-\\
			\hline
		\end{tabular}
		}
		\caption{An overview of the dataset and resulting parameters used in our experiment. 
		We report the average number of unique articles each category has in a day across all users; in our experiment this is equivalent to the number of arms in each category. Lastly, we say that a user \emph{likes} a category if at least 5\% of the articles they read are from that category, and we report the \% of users who like each category.  
	}
		\label{tab:data}
\end{table}

We have 21 users, each of which we think of as a different context. 
We report the normalized cumulative reward averaged across all users for each of the algorithms described above.
Error bars depict the standard error of the mean.

\begin{figure*}[t!]
	\centering
	\begin{tabular}{ccc}
	{\includegraphics[width=0.45\textwidth]{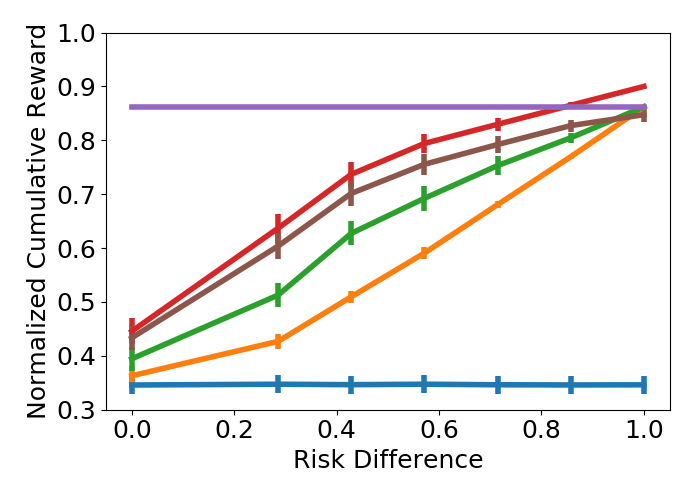}}&
	{\includegraphics[width=0.44\textwidth]{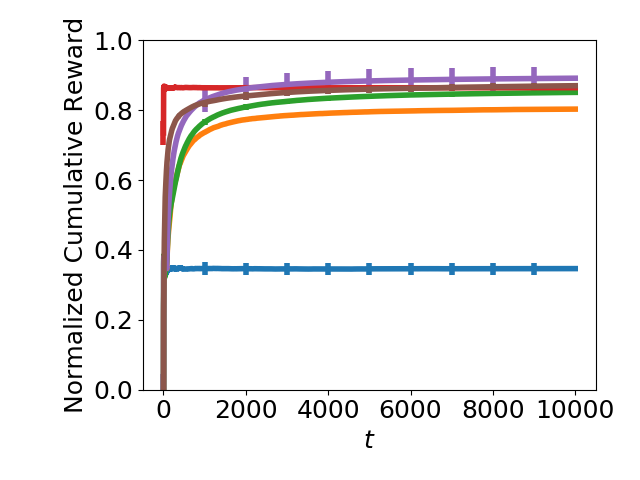}}&		
	{\includegraphics[width=0.1\textwidth,trim = 0cm -4cm 0cm 0cm]{legend_ICML.JPG}}
		\\ 
		$\qquad\;\;$(a) &    $\qquad\;\; $ (b)
	\end{tabular}
	\caption{{Empirical results on Real-World Data.} (a) {\em Tradeoff between Fairness and Reward.} The $x$ axis depicts the fairness of the constrained algorithms as measured by risk difference, and its effect on the normalized cumulative reward is reported. 
 Achieves the same risk difference by instead varying the upper bounds $u_i = u$ and leaving the lower bounds unconstrained $\ell_i = 0$ for all $i$.  (b) {\em Convergence Over Time.} We observe that, for sufficiently many iterations, the normalized cumulative rewards from \FAIR and \EPS converge to \OPT, but \RAN does not. 
	}
	\label{fig:realworld}

\end{figure*}

We consider the risk difference fairness metric (see Section \ref{sec:metrics-app}), and study how guaranteeing a fixed amount of fairness affects the normalized cumulative reward. Note that, if $\ell_i = \ell$ and $u_i = u$ for all $i$, then the risk difference is upper bounded by $u - \ell$, where 1 is the most unfair (users can be served completely different groups) and 0 is the most fair (all users see the same proportion of each group). Thus, for any fixed value $x$ of the risk difference, any $u-\ell = x$ satisfies the guarantee. 
We consider the extreme setting where $\ell = 0$ and we vary only the upper bound $u$ in order to satisfy the desired amount of fairness (see Figure \ref{fig:realworld}).\footnote{Note that in order to compute the risk difference, one must take $\ell = \max\{0,1-(g-1)\cdot u\}$, i.e., one must consider the implicit lower bound implied by the set of upper bounds. Similarly, $u = \min\{1, 1-(g-1)\cdot \ell\}$. The risk difference is guaranteed to be at most $u - \ell$ given these implicit definitions. }

We observe that across the board \FAIR and \EPS outperform \RAN. However, for the tightest constraints, none give much of an advantage over the \NAIVE algorithm. As discussed above, this is due to the pessimistic nature of our reward estimates where we assume that if a viewer does not prefer a given category, they receive 0 reward from viewing such an article. Hence, when enforcing strict constraints, the algorithm must take many such 0-reward decisions. However, for more moderate constraints the performance improves significantly as compared to \NAIVE. Furthermore, this is where the advantage of \EPS and \FAIR as opposed to \RAN can be seen; in effect, \EPS and \FAIR optimize over the (non-zero) sub-optimal groups that the fairness constraints dictate, giving them an advantage over the other randomized approaches to satisfying fairness constraints. 

Unlike the synthetic data, there is now a gap between the performance of the \FAIR, \EPS  and \OPT algorithms; this is largely due to the fact that, with 81 arms, there is a higher learning cost. 
Indeed, even in the completely unconstrained case, we see that \OPT outperforms \UNC by an additive gap of approximately $0.05$. We now compare the empirical regret and running time for \FAIR and \EPS.

\paragraph{Empirical Regret.}We note that \EPS outperforms \FAIR for most settings with constraints. This demonstrates the worse dependence of regret on $k$ for \FAIR. While the regret of \FAIR grows as $O\left(k^3\right)$ with the number of arms $k$, regret of \EPS grows as $O\left(k\right)$. In the case of 81 arms, the difference between \FAIR and \EPS becomes apparent.
Given enough time, the normalized reward of \FAIR does converge to \OPT as predicted by {Theorem \ref{theorem:fofulreg}} (we report the empirical convergence, for $\ell = 0$ and $u = \nicefrac{6}{7}$, in Figure \ref{fig:realworld}b).

\paragraph{Empirical Running Time.}With $\ell = 0$, $u = \nicefrac{6}{7}$, and $T=2000$, the empirical running time of \EPS is 3.9 seconds, whereas the empirical running time of \FAIR is 187.7 seconds, which shows the efficiancy gains we have with $\epsgreedy$ as compared to $\foful$.

Lastly, we observe that satisfying the same risk difference using upper bound constraints as opposed to lower bound constraints results in higher reward when the constraints are not tight. This occurs, again, because of the pessimistic reward calculations -- upper bound constraints allow more probability mass to be kept away from the 0-reward groups for longer while satisfying the same fairness guarantee. 
In general, for a given fairness metric, there could be multiple ways of setting constraints to achieve the same fairness guarantee; an important open question that remains is how to optimize over these different possibilities.

\section{Proofs}
\subsection{Proof of Theorem \ref{theorem:fofulreg}}
\label{sec:proofs_fofulreg}
In this section we prove Theorem \ref{theorem:fofulreg}.
In fact we prove the following more precise version of it.
Since the bound is the same for any context $s \in \mathcal{S}$, we omit the context henceforth.
\begin{theorem}\label{thm:main2} Given the description of $\mathcal{C}$ and the sequence of rewards drawn from a $O(1)$-subgaussian distribution with the expectation vector  $\mu^\star$. 
Assume that $\norm{\mu^\star}_2 \leq \sigma$ for some $\sigma \geq 1$.
Then, with probability at least $1-\delta$, the regret of \foful after time $T$ is:
\begin{align*}
	&{\sf FairRegret}_T \leq \frac{8k\sigma^2}{\gamma}\bigg(\log T + (k-1)\log\frac{64 \sigma^2}{\gamma^2} +\\
	&  2(k-1)\log\big(k\log\big(1 + \nicefrac{T}{k}\big) + 2\log\left(\nicefrac{1}{\delta}\right)\big)+ 2\log\left(\nicefrac{1}{\delta}\right)\bigg)^2.
\end{align*}
\end{theorem}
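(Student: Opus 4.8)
The plan is to follow the standard analysis of \oful-style algorithms (as in \cite{YA2011} and \cite{dani2008stochastic}), adapted to the $\ell_1$-confidence-ball and, crucially, to exploit the gap $\gamma$ defined in \eqref{eq:gamma} so that the $\sqrt{T}$ bound collapses to a $\log^2 T$ bound. First I would establish the confidence-set guarantee: with probability at least $1-\delta$, for all $t$ the true mean $\mu^\star$ lies in the set $B_t^1 = \{\mu : \norm{\mu - \hat\mu_t}_{1,V_t} \le \sqrt{k\beta_t(\delta)}\}$. This follows from the self-normalized concentration inequality for vector-valued martingales applied to $\sum_{s<t} p^s (r_s - {\mu^\star}^\top p^s)$, which gives the bound in the weighted $\ell_2$-norm $\norm{\cdot}_{V_t}$; one then converts to the weighted $\ell_1$-norm at the cost of the extra factor $\sqrt{k}$ (Cauchy--Schwarz in the eigenbasis of $V_t$), exactly the step analogous to the \cbtwo-to-\cbone passage in \cite{dani2008stochastic}. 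The choice $\beta_t(\delta) = (\sqrt{2\log(\det(V_t)/\delta)^{1/2}} + \sigma)^2$ is precisely what makes this work with the ridge regularization $V_1 = \cI$ and $\norm{\mu^\star}_2 \le \sigma$.

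Next I would set up the per-step regret decomposition. On the good event, since $p^t = \argmax_{p \in \cC}\max_{\mu \in B_t^1}\mu^\top p$ and $\mu^\star \in B_t^1$, optimism gives ${\mu^\star}^\top(v^\star - p^t) \le \tilde\mu_t^\top p^t - {\mu^\star}^\top p^t \le \norm{\tilde\mu_t - \mu^\star}_{1,V_t}\cdot\norm{p^t}_{*}$ for the appropriate dual norm, and bounding $\norm{\tilde\mu_t - \mu^\star}_{1,V_t} \le 2\sqrt{k\beta_t(\delta)}$ by the triangle inequality yields an instantaneous-regret bound of the form $r_t := {\mu^\star}^\top(v^\star - p^t) \le 2\sqrt{k\beta_t(\delta)}\,\norm{p^t}_{V_t^{-1}}$. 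Summing and using Cauchy--Schwarz gives ${\sf FairRegret}_T \le 2\sqrt{k\beta_T(\delta)}\sqrt{T\sum_{t}\norm{p^t}^2_{V_t^{-1}}}$, and the elliptical-potential ("log-det") lemma bounds $\sum_t \min(1,\norm{p^t}^2_{V_t^{-1}}) \le 2\log\det(V_{T+1}) \le 2k\log(1+T/k)$ (using $\norm{p^t}_2 \le 1$ since $p^t$ is a distribution). At this stage one has the usual $\tilde O(k\sqrt{T})$-type bound, i.e. ${\sf FairRegret}_T = \tilde O(k\sqrt{T\beta_T})$, and also a useful pointwise estimate $\beta_T(\delta) \le (\sqrt{2\log(1/\delta) + k\log(1+T/k)} + \sigma)^2 = O(\sigma^2(\log(1/\delta) + k\log(1+T/k)))$.

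The main obstacle — and the step that actually delivers the theorem's $\log^2 T$ bound rather than $\sqrt T$ — is the gap argument: I would show that once the confidence set is small enough, the algorithm is forced to play the optimal vertex $v^\star$, so that only finitely many (roughly $\log^2 T$ many) steps contribute. Concretely, if at time $t$ the instantaneous regret $r_t$ is positive, then $p^t \ne v^\star$, and since the linear objective ${\mu^\star}^\top(\cdot)$ is maximized over the polytope $\cC$ at a vertex, one can argue $r_t \ge \gamma'$ for a gap $\gamma'$ comparable to $\gamma$ (here some care is needed because $p^t$ is a point of $\cC$, not necessarily a vertex, so one argues that a non-optimal \emph{vertex} is picked by the optimistic LP, or decomposes $p^t$ into vertices). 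Combining $r_t \ge \gamma$ with the upper bound $r_t \le 2\sqrt{k\beta_t(\delta)}\norm{p^t}_{V_t^{-1}}$ forces $\norm{p^t}^2_{V_t^{-1}} \ge \gamma^2/(4k\beta_t(\delta))$ on every "bad" step. Plugging this lower bound into the elliptical-potential inequality $\sum_{t\,\text{bad}} \norm{p^t}^2_{V_t^{-1}} \le 2k\log(1+T/k)$ bounds the number $N$ of bad steps by $N \le 8k\beta_T(\delta)\log(1+T/k)/\gamma^2$, and since each bad step costs at most $2\sqrt{k\beta_t}\norm{p^t}_{V_t^{-1}}$, a second Cauchy--Schwarz over just the bad steps gives ${\sf FairRegret}_T \le 2\sqrt{k\beta_T(\delta)}\sqrt{N\cdot 2k\log(1+T/k)} \le \frac{8k\sigma^2}{\gamma}\cdot\big(\text{polylog}\big)$. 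Finally I would substitute the explicit estimate for $\beta_T(\delta) = O(\sigma^2(\log(1/\delta)+k\log(1+T/k)))$ into this expression, iterate the logarithm once more to turn the $\log(k\log(1+T/k)+\log(1/\delta))$ term into the stated $2(k-1)\log(k\log(1+T/k)+2\log(1/\delta))$ form, and collect constants to match the displayed bound; the $\log\log T$ term arises precisely from the $\log\beta_T \sim \log(k\log T)$ factor. Care with constants in this last bookkeeping step, and the vertex-vs-point subtlety in the gap argument, are the two places where the write-up needs to be done carefully.
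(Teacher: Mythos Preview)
Your proposal is correct and captures all the essential ingredients: the confidence-set guarantee via the $\ell_2 \to \ell_1$ conversion (Cauchy--Schwarz), the optimistic per-step bound $R_t \le 2\sqrt{k\beta_t}\,\|p^t\|_{V_t^{-1}}$, the elliptical-potential (log-det) lemma, and the gap argument. The paper's proof, however, is considerably more direct at the key step. Rather than counting ``bad'' steps and applying Cauchy--Schwarz twice, it uses the one-line observation
\[
{\sf FairRegret}_T \;=\; \sum_{t} R_t \;\le\; \frac{1}{\gamma}\sum_{t} R_t^2,
\]
valid because each $R_t$ is either $0$ or at least $\gamma$ (so $R_t \le R_t^2/\gamma$ trivially). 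One then squares the per-step bound, pulls $\beta_T$ outside by monotonicity, applies the log-det lemma once to get $\sum_t R_t \le \frac{8k\beta_T}{\gamma}\log\det V_T$, and finishes by invoking a black-box estimate for $\frac{\beta_T}{\gamma}\log\det V_T$ from \cite{YA2011}. Your detour --- bounding $N \le 8k\beta_T\log\det V_T/\gamma^2$ and then Cauchy--Schwarz over the $N$ bad steps --- lands on exactly the same expression, just with more bookkeeping. Finally, your ``vertex-vs-point subtlety'' is not an obstacle: $p^t$ is the maximizer of a convex (piecewise-linear) function over the polytope $\cC$ (computed as the best of $2k$ LPs), hence always a vertex of $\cC$, so $R_t \in \{0\}\cup[\gamma,\infty)$ holds immediately.
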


\paragraph{Notations for the proof.} 
For a positive definite matrix $A\in\bR^{k\times k}$, the weighted $1$-norm and $2$-norm of a vector $x \in \bR^k$ is defined by 
$$\norm{x}_{1,A} := \sum_{i = 1}^k\abs{A^{\nicefrac{1}{2}}x}_i \mbox{and} \; \; \norm{x}_{2,A} := \sqrt{x^\top A x}.$$
Let  $p^\star := \argmax _{p \in \mathcal{C}} \langle \mu^\star, p\rangle$. 
Let the instantaneous regret $R_t$ at time $t$ of \foful be defined as the difference between the expected values of the reward received for $p^\star$ and the chosen arm $p^t$: 
$$R_t= \ip{\mu^\star}{p^\star-p^t}.$$
The cumulative regret until time $T$, ${\sf FairRegret}_T$, is defined as $\sum_{t = 1}^{T}R_t$.
Recall that $r^t$ is the reward that the algorithm receives at the $t$-th time instance. 
Note that the expected value of reward $r^t$ is $\ip{\mu^\star}{p^t}$. 
Let  $$\eta_t := r^t - \ip{\mu^\star}{p^t}.$$
The fact that $r^t$ is $O(1)$-subgaussian implies that $\eta_t$ is also $O(1)$-subgaussian.
Finally, recall that we denote our estimate of $\mu^\star$ at the $t$-th iteration by $\hat{\mu}_t$.

\paragraph{Technical lemmas.}
Towards the proof of Theorem \ref{thm:main2}, we need some results
from \cite{dani2008stochastic} and \cite{YA2011} that we restate in our setting.
The first is a theorem from \cite{YA2011} which helps us to prove that $\mu^\star$ lies in the confidence set $B_t^1$ at each time-step with high probability.
\begin{theorem}[Theorem 2 in \cite{YA2011}] Assume that the rewards are drawn from an $O(1)$-subgaussian distribution with the expectation vector $\mu^\star$.
	Then, for any $0 < \delta < 1$, with probability at least $1-\delta$, for all $t \geq 0, \mu^\star$ lies in the set 
	\begin{align*}
	B_t^2 := \left\{\mu \in \bR^k : \norm{\mu - \hat{\mu}_t}_{2, V_t} \leq \sqrt{\beta_t(\delta)}\right\}
	\end{align*}
	where $\beta_t$ is defined in Step 5 of the \foful algorithm.
	\label{theorem:YAconf}
\end{theorem}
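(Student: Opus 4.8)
Since this is precisely the self-normalized confidence-ellipsoid bound for regularized least squares from \cite{YA2011}, the plan is to reproduce its proof in our notation. First I would expand the least-squares estimate: because $b_t = \sum_{s=1}^{t-1} r^s p^s = \big(\sum_{s=1}^{t-1} p^s {p^s}^\top\big)\mu^\star + \sum_{s=1}^{t-1}\eta_s p^s = (V_t - \cI)\mu^\star + S_t$, where $S_t := \sum_{s=1}^{t-1}\eta_s p^s$, one gets $\hat{\mu}_t - \mu^\star = V_t^{-1}(S_t - \mu^\star)$. Applying the triangle inequality in the $\norm{\cdot}_{2,V_t}$-norm and using $V_t \succeq \cI$ (so that $\norm{\mu^\star}_{2,V_t^{-1}} \le \norm{\mu^\star}_2 \le \sigma$) yields
\[
\norm{\hat{\mu}_t - \mu^\star}_{2,V_t} = \norm{S_t - \mu^\star}_{2,V_t^{-1}} \le \norm{S_t}_{2,V_t^{-1}} + \sigma.
\]
Hence it suffices to show that, with probability at least $1-\delta$, $\norm{S_t}_{2,V_t^{-1}}^2 \le 2\log\big(\det(V_t)^{1/2}/\delta\big)$ simultaneously for all $t \ge 0$ (the absolute constant coming from $O(1)$-subgaussianity); combined with the display this gives exactly the radius $\sqrt{\beta_t(\delta)}$ in the definition of $B_t^2$.

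The heart of the argument is this uniform self-normalized tail bound, which I would prove by the method of mixtures. For a fixed $x \in \mathbb{R}^k$ consider
\[
M_t^x := \exp\!\Big(\langle x, S_t\rangle - \tfrac12\, x^\top\!\big(\textstyle\sum_{s=1}^{t-1} p^s{p^s}^\top\big) x\Big);
\]
since each $\eta_s$ is conditionally $O(1)$-subgaussian given the past and $p^s$ is predictable, the sub-Gaussian MGF bound gives $\mathbb{E}[M_{t+1}^x \mid \mathcal F_t] \le M_t^x$, i.e.\ $M_t^x$ is a nonnegative supermartingale with $M_0^x = 1$. Mixing over $x$ against the Gaussian prior $x \sim \mathcal N(0,\cI)$ and using Tonelli, $\bar M_t := \int M_t^x \, dh(x)$ is again a nonnegative supermartingale with $\bar M_0 = 1$, and the Gaussian integral (completing the square) can be evaluated in closed form:
\[
\bar M_t = \det(V_t)^{-1/2}\,\exp\!\Big(\tfrac12 \norm{S_t}_{2,V_t^{-1}}^2\Big).
\]
To turn this into a statement holding for all $t$ at once, despite the data-dependent normalizer $V_t$, I would introduce the stopping time $\tau := \min\{t : \norm{S_t}_{2,V_t^{-1}}^2 > 2\log(\det(V_t)^{1/2}/\delta)\}$, apply the optional-stopping / maximal (Ville) inequality to the supermartingale $\bar M_{t\wedge\tau}$ to obtain $\mathbb{E}[\bar M_\tau \mathbf{1}\{\tau < \infty\}] \le 1$, and note that on $\{\tau < \infty\}$ one has $\bar M_\tau > 1/\delta$; Markov's inequality then gives $\mathbb{P}(\tau < \infty) \le \delta$, which is the desired uniform bound.

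I expect the only genuinely delicate point to be this last step — handling the random, history-dependent normalizer $V_t$ so that the confidence ellipsoid is valid uniformly in $t$ — which is exactly why one uses the mixture supermartingale instead of a union bound over a fixed cover of directions or a deterministic $V$. The supermartingale property of $M_t^x$ reduces to the scalar sub-Gaussian MGF estimate and is routine, and the Gaussian integral is a direct computation; everything else is the elementary linear algebra in the first paragraph.
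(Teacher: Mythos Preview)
The paper does not prove this theorem at all: it is restated verbatim from \cite{YA2011} and used as a black box (the only work the paper does with it is the one-line Cauchy--Schwarz step in Lemma~\ref{lemma:confidenceset} to pass from $B_t^2$ to $B_t^1$). Your proposal, by contrast, reconstructs the actual proof from \cite{YA2011}: the decomposition $\hat{\mu}_t-\mu^\star=V_t^{-1}(S_t-\mu^\star)$, the triangle inequality in the $V_t$-norm isolating the regularization bias $\norm{\mu^\star}_{2,V_t^{-1}}\le\sigma$, and then the self-normalized tail bound for $\norm{S_t}_{2,V_t^{-1}}$ via the method of mixtures (exponential supermartingale $M_t^x$, Gaussian mixing to get $\bar M_t=\det(V_t)^{-1/2}\exp(\tfrac12\norm{S_t}_{2,V_t^{-1}}^2)$, and Ville/optional stopping for uniformity in $t$). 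This is exactly the argument of Abbasi-Yadkori, P\'al and Szepesv\'ari, and your identification of the uniform-in-$t$ step as the delicate point is on the mark. So your plan is correct and strictly more detailed than what the paper itself contains; just be aware that in the present paper the noise $\eta_t=r^t-\langle\mu^\star,p^t\rangle$ already incorporates the randomness of $a^t\sim p^t$, so the conditional $O(1)$-subgaussianity you invoke is with respect to the filtration that makes $p^t$ measurable but not $a^t$---which is fine since $r^t\in[0,1]$.
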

\noindent
As a simple consequence of this theorem we prove that $\mu^\star$ lies inside $B_t^1$ with high probability.
\begin{lemma} $\mu^\star$ lies in the confidence set $B_t^1$ with a probability at least $1-\delta$ for all $t \in T$.
	\label{lemma:confidenceset}
\end{lemma}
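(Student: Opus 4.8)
The plan is to derive Lemma~\ref{lemma:confidenceset} as a direct corollary of Theorem~\ref{theorem:YAconf} by relating the weighted $1$-norm ball $B_t^1$ to the weighted $2$-norm ball $B_t^2$. The key observation is that for any positive definite $A \in \bR^{k\times k}$ and any vector $x \in \bR^k$, we have $\norm{x}_{1,A} \leq \sqrt{k}\,\norm{x}_{2,A}$; this is just the standard inequality $\norm{y}_1 \leq \sqrt{k}\,\norm{y}_2$ applied to $y = A^{1/2}x$, via Cauchy--Schwarz. Applying this with $A = V_t$ and $x = \mu - \hat\mu_t$ shows that $\norm{\mu - \hat\mu_t}_{2,V_t} \leq \sqrt{\beta_t(\delta)}$ implies $\norm{\mu - \hat\mu_t}_{1,V_t} \leq \sqrt{k\beta_t(\delta)}$, i.e. $B_t^2 \subseteq B_t^1$.

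With this containment in hand, the proof is immediate: by Theorem~\ref{theorem:YAconf}, with probability at least $1-\delta$ we have $\mu^\star \in B_t^2$ for all $t \geq 0$ simultaneously, and since $B_t^2 \subseteq B_t^1$, it follows that $\mu^\star \in B_t^1$ for all $t$ with probability at least $1-\delta$. (One should note that the failure probability $\delta$ here is the same $\delta$ that enters the definition of $\beta_t(\delta)$ in Step~3 of Algorithm~\ref{algo:foful}, so the statement is internally consistent; the ``for all $t \in T$'' in the lemma should be read as ``for all $t$'', matching the uniform-in-$t$ guarantee of Theorem~\ref{theorem:YAconf}.)

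The steps, in order, are: (i) state and prove the norm comparison $\norm{x}_{1,A} \leq \sqrt{k}\,\norm{x}_{2,A}$ for positive definite $A$; (ii) conclude $B_t^2 \subseteq B_t^1$ for every $t$; (iii) invoke Theorem~\ref{theorem:YAconf} to place $\mu^\star$ in $B_t^2$ for all $t$ with probability $\geq 1-\delta$; (iv) transfer this to $B_t^1$ via the containment. There is essentially no obstacle here — the only thing to be slightly careful about is that the scaling factor $\sqrt{k}$ in the definition of $B_t^1$ (Step~4 of the algorithm) is exactly what is needed to absorb the loss in the $1$-norm versus $2$-norm comparison, so the containment holds with no slack to spare; this is the design choice that makes the adaptation from \oful to \foful work, and it is the one point worth flagging explicitly in the write-up.
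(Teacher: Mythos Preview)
Your proposal is correct and follows essentially the same approach as the paper: the paper's proof is the two-line chain $\norm{\mu^\star - \hat{\mu}_t}_{1, V_t}\leq \sqrt{k}\,\norm{\mu^\star - \hat{\mu}_t}_{2, V_t}\leq \sqrt{k\beta_t(\delta)}$, citing Cauchy--Schwarz for the first inequality and Theorem~\ref{theorem:YAconf} for the second. Your version simply packages the same argument as the containment $B_t^2 \subseteq B_t^1$ before invoking the theorem, which is an equivalent (and arguably cleaner) phrasing.
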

\begin{proof}
	\begin{align*}
	\norm{\mu^\star - \hat{\mu}_t}_{1, V_t}\leq \sqrt{k}\norm{\mu^\star - \hat{\mu}_t}_{2, V_t}
	\leq \sqrt{k\beta_t(\delta)},
	\end{align*}
	Here, the first inequality follows from Cauchy-Schwarz and the second inequality holds with probability at least $1-\delta$ for all $t$ due to Theorem \ref{theorem:YAconf}.
\end{proof}

\noindent
The following four lemmas would be required in the proof of our main theorem.
\begin{lemma}[Lemma 7 in \cite{dani2008stochastic}] For all $\mu \in B_t^1$ (as defined in Step 6 of \foful) and all $p \in \cC$, we have: 
	$$\abs{(\mu-\hat\mu_t)^\top p} \leq \sqrt{k\beta_t(\delta)p^\top V_t^{-1}p}$$ where $V_t$ is defined in Step 10 of the \foful algorithm.
	\label{lemma:dani7}
\end{lemma}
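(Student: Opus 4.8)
The plan is to prove Lemma \ref{lemma:dani7} by reducing the $\ell_1$-ball statement to the already-established $\ell_2$-ball statement from Theorem \ref{theorem:YAconf}, mirroring the relationship between \cbone and \cbtwo in \cite{dani2008stochastic}. First I would fix any $\mu \in B_t^1$ and any $p \in \cC$, and write the quantity of interest as a weighted inner product: since $V_t$ is positive definite, $(\mu-\hat\mu_t)^\top p = (V_t^{1/2}(\mu-\hat\mu_t))^\top (V_t^{-1/2} p)$. Applying Cauchy--Schwarz to this factorization gives $\abs{(\mu-\hat\mu_t)^\top p} \le \norm{V_t^{1/2}(\mu-\hat\mu_t)}_2 \cdot \norm{V_t^{-1/2}p}_2 = \norm{\mu-\hat\mu_t}_{2,V_t} \cdot \sqrt{p^\top V_t^{-1}p}$.

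The remaining step is to bound $\norm{\mu-\hat\mu_t}_{2,V_t}$ by $\sqrt{k\beta_t(\delta)}$ for every $\mu \in B_t^1$. By the definition of $B_t^1$ in Step 4 of \foful, we have $\norm{\mu-\hat\mu_t}_{1,V_t} \le \sqrt{k\beta_t(\delta)}$. So it suffices to observe that $\norm{x}_{2,A} \le \norm{x}_{1,A}$ for any positive definite $A$ — indeed, writing $y = A^{1/2}x$, this is just the standard fact that $\norm{y}_2 \le \norm{y}_1$ for vectors in $\bR^k$. Combining the two displayed bounds yields $\abs{(\mu-\hat\mu_t)^\top p} \le \sqrt{k\beta_t(\delta)}\cdot\sqrt{p^\top V_t^{-1}p} = \sqrt{k\beta_t(\delta)\, p^\top V_t^{-1}p}$, which is exactly the claim.

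I do not expect any real obstacle here: the lemma is essentially two applications of elementary norm inequalities (Cauchy--Schwarz in the $V_t$-weighted inner product, and the $\ell_2 \le \ell_1$ comparison) together with the syntactic definition of $B_t^1$. The only point to be careful about is keeping the weighted-norm notation $\norm{\cdot}_{1,V_t}$ and $\norm{\cdot}_{2,V_t}$ consistent with the paper's conventions (where $\norm{x}_{1,A} = \sum_i \abs{A^{1/2}x}_i$ and $\norm{x}_{2,A} = \sqrt{x^\top A x}$), so that the substitution $y = V_t^{1/2}(\mu-\hat\mu_t)$ cleanly converts both inequalities into statements about ordinary $\ell_1$ and $\ell_2$ norms of $y$. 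Note also that this lemma is deterministic in the sense that it holds for \emph{all} $\mu \in B_t^1$; the high-probability event (that $\mu^\star$ itself lies in $B_t^1$) is the separate content of Lemma \ref{lemma:confidenceset} and is not needed here.
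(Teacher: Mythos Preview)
Your argument is correct. The paper does not actually supply its own proof of this lemma; it simply cites it as Lemma~7 of \cite{dani2008stochastic} and uses it as a black box, so there is nothing in the paper to compare against. Your two-step derivation (Cauchy--Schwarz in the $V_t$-weighted inner product, followed by the elementary comparison $\norm{y}_2 \le \norm{y}_1$ applied to $y = V_t^{1/2}(\mu-\hat\mu_t)$) is a clean and self-contained proof, and your closing remark that the lemma is purely deterministic---independent of the high-probability event in Lemma~\ref{lemma:confidenceset}---is exactly right. One minor stylistic note: the opening sentence about ``reducing to Theorem~\ref{theorem:YAconf}'' is a little misleading, since your proof never invokes that theorem; you might drop that clause to avoid suggesting a dependency that is not there.
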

\begin{lemma}[Lemma 8 in \cite{dani2008stochastic}]  If $\mu^\star \in B_t^1$, then $$R_t \leq 2\min\left(\sqrt{k\beta_t(\delta)p^{t\top} V_t^{-1}p^t}, 1\right).$$
	\label{lemma:dani8}
\end{lemma}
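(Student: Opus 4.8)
The plan is to run the standard optimism-in-the-face-of-uncertainty argument, now carried out over the polytope $\cC$. For $p\in\cC$ write $h(p):=\max_{\mu\in B_t^1}\mu^\top p$; by Step 5 of \foful, $p^t=\arg\max_{p\in\cC}h(p)$, and let $\tilde\mu_t:=\arg\max_{\mu\in B_t^1}\mu^\top p^t$ be the optimistic parameter witnessing $h(p^t)$, so that $h(p^t)=\tilde\mu_t^\top p^t$ and $\tilde\mu_t\in B_t^1$ (an $\arg\max$ over the compact set $B_t^1$). Using the hypothesis $\mu^\star\in B_t^1$ we get $h(p^\star)=\max_{\mu\in B_t^1}\mu^\top p^\star\ge(\mu^\star)^\top p^\star$, and the optimality of $p^t$ gives the \emph{optimism inequality} $\tilde\mu_t^\top p^t=h(p^t)\ge h(p^\star)\ge(\mu^\star)^\top p^\star$. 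This is the only place the definition of $p^t$ is used.

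Next I would turn this into a bound on the instantaneous regret by inserting $\pm\hat\mu_t$:
\[
R_t=(\mu^\star)^\top p^\star-(\mu^\star)^\top p^t\ \le\ \tilde\mu_t^\top p^t-(\mu^\star)^\top p^t\ =\ (\tilde\mu_t-\hat\mu_t)^\top p^t+(\hat\mu_t-\mu^\star)^\top p^t,
\]
where the inequality is exactly the optimism inequality above. Since $\tilde\mu_t\in B_t^1$, $\mu^\star\in B_t^1$, and $p^t\in\cC$, Lemma \ref{lemma:dani7} applies to each of the two terms on the right and bounds each of them in absolute value by $\sqrt{k\beta_t(\delta)\,p^{t\top}V_t^{-1}p^t}$. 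Summing these two bounds yields $R_t\le 2\sqrt{k\beta_t(\delta)\,p^{t\top}V_t^{-1}p^t}$.

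Finally I would combine this with the trivial estimate $R_t\le 1$: because $\mu^\star\in[0,1]^k$ and $p^\star,p^t$ are probability vectors, $(\mu^\star)^\top p^\star\le1$ and $(\mu^\star)^\top p^t\ge0$, so $R_t\le1\le2$. Putting the two estimates together gives $R_t\le\min\!\big(2\sqrt{k\beta_t(\delta)\,p^{t\top}V_t^{-1}p^t},\,2\big)=2\min\!\big(\sqrt{k\beta_t(\delta)\,p^{t\top}V_t^{-1}p^t},\,1\big)$, which is the claim. I do not expect any genuine obstacle here; the only points worth stating carefully are that $\tilde\mu_t$ indeed lies in $B_t^1$ (so that Lemma \ref{lemma:dani7} may be invoked for it, not just for $\mu^\star$), and that rewriting the joint maximization over $B_t^1\times\cC$ as the nested maximization $h$ is exactly what licenses comparing $\tilde\mu_t^\top p^t$ with $(\mu^\star)^\top p^\star$.
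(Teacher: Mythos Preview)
The paper does not give its own proof of this lemma; it simply cites it as Lemma~8 of Dani et al.\ (2008). Your argument is exactly the standard optimism-in-the-face-of-uncertainty proof that appears there (adapted to the $L_1$-ball $B_t^1$ and the polytope $\cC$): use the joint optimality of $(p^t,\tilde\mu_t)$ together with $\mu^\star\in B_t^1$ to obtain $\tilde\mu_t^\top p^t\ge(\mu^\star)^\top p^\star$, then split $(\tilde\mu_t-\mu^\star)^\top p^t$ around $\hat\mu_t$ and apply Lemma~\ref{lemma:dani7} to each half. This is correct and matches the cited source, so there is nothing to add.
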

\begin{lemma}[Lemma 11 in \cite{YA2011}] Let $\left\{p^t\right\}_{t=1}^{T}$ be a sequence in $\bR^k$, $V = \cI_k$ be the $k\times k$ identity matrix, and define $V_t := V + \sum_{\tau=1}^{t}p^{\tau}p^{\tau\top}$. Then, we have that:
	\begin{align*}
	\log\det(V_T) \leq \sum_{t = 1}^{T}\norm{p^t}_{V_{t-1}^{-1}}^2 \leq 2\log\det(V_T).
	\end{align*}
	\label{lemma:ya11}
\end{lemma}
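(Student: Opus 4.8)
The plan is to reduce both inequalities to a single scalar identity obtained by telescoping the determinants of the rank-one updates $V_t = V_{t-1} + p^t (p^t)^\top$. First I would set $x_t := \norm{p^t}_{V_{t-1}^{-1}}^2 = (p^t)^\top V_{t-1}^{-1} p^t \geq 0$ and apply the matrix determinant lemma for a rank-one update, $\det(V_{t-1} + p^t (p^t)^\top) = \det(V_{t-1})\,(1 + x_t)$. Since $V_0 = V = \cI$ has $\det(V_0) = 1$, taking logarithms and summing over $t$ telescopes to the clean identity $\log\det(V_T) = \sum_{t=1}^T \log(1 + x_t)$, from which both sides of the claimed sandwich will be read off termwise.

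For the left (upper) inequality I would invoke the elementary bound $\log(1+x) \leq x$, valid for all $x \geq 0$, applied termwise to the identity above: $\log\det(V_T) = \sum_t \log(1+x_t) \leq \sum_t x_t = \sum_t \norm{p^t}_{V_{t-1}^{-1}}^2$. This direction is unconditional and requires no boundedness assumption on the sequence.

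For the right (lower) inequality I would use the complementary bound $x \leq 2\log(1+x)$, but this holds only on a bounded range of $x$, so the crux is to first establish $x_t \leq 1$. This is where the structure of the setting enters: because $V_{t-1} = \cI + \sum_{\tau < t} p^\tau (p^\tau)^\top \succeq \cI$, we have $V_{t-1}^{-1} \preceq \cI$ and hence $x_t \leq \norm{p^t}_2^2$; and since each $p^t$ is a probability distribution in $\mathcal{C}$, $\norm{p^t}_2^2 \leq \norm{p^t}_1^2 = 1$. On $[0,1]$ the function $f(x) = 2\log(1+x) - x$ satisfies $f(0) = 0$ and $f'(x) = (1-x)/(1+x) \geq 0$, so $f \geq 0$ there and thus $x_t \leq 2\log(1+x_t)$ termwise. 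Summing gives $\sum_t \norm{p^t}_{V_{t-1}^{-1}}^2 = \sum_t x_t \leq 2\sum_t \log(1+x_t) = 2\log\det(V_T)$, completing the lemma.

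The main obstacle is precisely this lower bound: the inequality $x \leq 2\log(1+x)$ fails for large $x$ (it breaks near $x \approx 2.5$), so the argument genuinely needs the a priori bound $x_t \leq 1$ rather than following from a universal scalar inequality. I would therefore make explicit that this bound rests on two facts — the regularizer $V_0 = \cI$ forcing $V_{t-1}^{-1} \preceq \cI$, and the $p^t$ being probability vectors with $\ell_2$-norm at most one — since without either ingredient the stated constant $2$ would have to be replaced by a quantity depending on a norm bound for the sequence.
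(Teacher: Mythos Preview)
Your argument is correct and is exactly the standard proof of this ``elliptical potential'' lemma: telescope $\log\det V_T$ via the matrix determinant lemma to $\sum_t \log(1+x_t)$, then sandwich with $\log(1+x)\le x$ and $x\le 2\log(1+x)$ on $[0,1]$. The paper itself does not supply a proof here---it simply restates the result from \cite{YA2011}---so there is no paper proof to compare against; your write-up is essentially the argument in that reference. One small point worth flagging: the lemma as restated in the paper says only ``a sequence in $\bR^k$'' and omits the boundedness hypothesis, but you have correctly identified that the upper inequality genuinely needs $x_t\le 1$, and that this is supplied in the paper's setting by $V_{t-1}\succeq \cI$ together with $\norm{p^t}_2\le\norm{p^t}_1=1$ for probability vectors $p^t\in\mathcal{C}$.
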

\noindent Finally, we state another result from the proof of Theorem 5 in \cite{YA2011}.
\begin{lemma} For any $T$, we have the following upper bound on the value of $\frac{\beta_T(\delta)}{\gamma}\log\det(V_T)$:
	\begin{align*}
	&\frac{\beta_T(\delta)}{\gamma}\log\det(V_T) \leq \frac{\sigma^2}{\gamma}\bigg(\log T + (k-1)\log\frac{64\sigma^2}{\gamma^2}\\
	&+2(k-1)\log(k\log\left(1 + \nicefrac{T}{k}\right) + 2\log(\nicefrac{1}{\delta}))+ 2\log(\nicefrac{1}{\delta})\bigg)^2
	\end{align*}
	where $\gamma$ is defined as in Equation \ref{eq:gamma}, $\sigma$, $\delta$ are as in Theorem \ref{thm:main2}.
	\label{lemma:appendixE}
\end{lemma}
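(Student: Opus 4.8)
\textbf{Proof proposal for Lemma~\ref{lemma:appendixE}.}
This lemma is the refined elliptical--potential estimate that turns the worst-case $\sqrt T$ analysis into the problem-dependent ($\log^2 T$) one; I would prove it by reproducing the relevant step in the proof of Theorem~5 of \cite{YA2011}, specialized to the confidence radius $\beta_t(\delta)$ of \foful and to the fact that the played points are probability distributions. I would organize it in four parts: (a) a crude bound $\log\det(V_T)\le k\log(1+\nicefrac Tk)$; (b) a bound on the number $N$ of \emph{regret-incurring} rounds (those with $R_t>0$) in terms of $\beta_T(\delta)$ and $\log\det(V_T)$; (c) a \emph{refined} bound on $\log\det(V_T)$ that exploits that, since $\mathcal C$ is a non-degenerate polytope, every non-regret-incurring round plays the \emph{same} vertex $v^\star$, so such rounds enlarge $V_T$ only along a single ray; and (d) solving the resulting self-referential inequality for $\log\det(V_T)$ and substituting it into $\beta_T(\delta)$.

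For (a): each $p^t$ is a probability distribution, so $\norm{p^t}_2^2\le\norm{p^t}_1=1$, hence $\mathrm{tr}(V_T)\le k+T$ and, since $\det(V_T)$ is the product of the $k$ eigenvalues, AM--GM gives $\det(V_T)\le(1+\nicefrac Tk)^k$. For (b): work on the event $\{\mu^\star\in B_t^1\ \forall t\}$, which holds with probability at least $1-\delta$ by Lemma~\ref{lemma:confidenceset} and which is in force whenever this lemma is used inside the proof of Theorem~\ref{thm:main2}. Because $\gamma>0$, $v^\star$ is the \emph{unique} maximizer of $\langle\mu^\star,\cdot\rangle$ over $\mathcal C$ and the LP oracle returns a vertex, so $p^t=v^\star$ whenever $R_t=0$, whereas $R_t\ge\gamma$ whenever $R_t>0$. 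In the latter case Lemma~\ref{lemma:dani8} gives $\gamma\le 2\sqrt{k\,\beta_t(\delta)\,p^{t\top}V_t^{-1}p^t}$, and since $\beta_t(\delta)$ is non-decreasing in $t$ this forces $p^{t\top}V_t^{-1}p^t\ge\gamma^2/(4k\beta_T(\delta))$; summing over regret-incurring rounds and invoking Lemma~\ref{lemma:ya11} yields $N\cdot\gamma^2/(4k\beta_T(\delta))\le 2\log\det(V_T)$, i.e. $N\le 8k\,\beta_T(\delta)\log\det(V_T)/\gamma^2$.

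For (c): write $V_T=\cI+N_\star v^\star v^{\star\top}+M$, where $N_\star$ is the number of non-regret-incurring rounds and $M\succeq 0$ is the sum of $p^tp^{t\top}$ over the $N$ regret-incurring rounds, so $\mathrm{rank}(M)\le N$ and $\mathrm{tr}(M)\le N$. The Rayleigh quotient at $v^\star$ shows the top eigenvalue of $V_T$ is at least $1+N_\star\norm{v^\star}_2^2$, while eigenvalue subadditivity bounds it above by $1+N_\star+N\le 1+T$; hence the remaining $k-1$ eigenvalues sum to at most $\mathrm{tr}(V_T)-(1+N_\star\norm{v^\star}_2^2)\le k-1+N$, and AM--GM over those gives
\[ \log\det(V_T)\ \le\ \log(1+T)+(k-1)\log\!\Bigl(1+\tfrac{N}{k-1}\Bigr). \]
The crucial gain over (a) is that the $T$-dependence now sits inside a \emph{single} logarithm rather than being multiplied by $k$. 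For (d): substitute the bound $N\le 8k\beta_T(\delta)\log\det(V_T)/\gamma^2$ from (b), use $\beta_T(\delta)=O\bigl(\sigma^2(\log\det(V_T)+\log\nicefrac1\delta)\bigr)$ (from $\norm{\mu^\star}_2\le\sigma$, $\sigma\ge1$, and the definition of $\beta_T$), and replace the $\log\det(V_T)$ that then appears inside the outer logarithm by the crude bound from (a); writing $L:=\log\det(V_T)$ this produces an inequality of the form $L\le\log T+(k-1)\log\!\bigl(c\,\sigma^2(k\log(1+\nicefrac Tk)+2\log\nicefrac1\delta)^2/\gamma^2\bigr)$, which unfolds to $L\le\log T+(k-1)\log\tfrac{64\sigma^2}{\gamma^2}+2(k-1)\log\bigl(k\log(1+\nicefrac Tk)+2\log\nicefrac1\delta\bigr)+2\log\nicefrac1\delta$ once the absolute constants are tracked. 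Multiplying this by $\beta_T(\delta)/\gamma$ and bounding $\beta_T(\delta)$ by $\sigma^2$ times the same right-hand side gives the claimed estimate for $\tfrac{\beta_T(\delta)}{\gamma}\log\det(V_T)$.

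The main obstacle is step (c): the whole content of the lemma is that the exploitation rounds, all aligned with the single fixed direction $v^\star$, inflate $\log\det(V_T)$ by only an additive $O(\log T)$ rather than the naive $\Theta(k\log T)$, and making the eigenvalue bookkeeping clean (pinning down that the dominant eigenvalue is indeed the $v^\star$-direction one, and controlling how the rank-$N$ part $M$ perturbs the spectrum) is the delicate part; this is exactly where the polytope structure of $\mathcal C$ is exploited, and it is what separates the $\tilde O\bigl(\tfrac k\gamma(k^2+\log^2T)\bigr)$ rate of \foful from a naive $\tilde O\bigl(\tfrac{k^3\log^2T}{\gamma}\bigr)$. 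Step (d) is then routine but constant-sensitive (solving $L\le a+b\log(bL)$ and keeping the doublings in hand is what produces the explicit $64$), and one should remember that the whole argument lives on the event $\{\mu^\star\in B_t^1\ \forall t\}$.
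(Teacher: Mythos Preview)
The paper does not supply its own proof of this lemma: it is stated as ``another result from the proof of Theorem~5 in \cite{YA2011}'' and then used as a black box in the proof of Theorem~\ref{thm:main2}. So there is no in-paper argument to compare against; the relevant benchmark is the YA2011 proof itself.

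Your proposal is precisely a reconstruction of that YA2011 argument, adapted to the $L_1$ confidence set and to the fact that the actions $p^t$ are probability vectors. The four-step structure you outline---(a) the crude $\log\det(V_T)\le k\log(1+T/k)$ via trace/AM--GM, (b) the count $N$ of regret-incurring rounds via $R_t\ge\gamma$ combined with Lemma~\ref{lemma:dani8} and Lemma~\ref{lemma:ya11}, (c) the refined determinant bound exploiting that all non-regret rounds play the single vertex $v^\star$ so their contribution to $\log\det(V_T)$ is only $O(\log T)$ along one direction, and (d) substituting (b) into (c), replacing the inner $\log\det$ by the crude bound (a), and unwinding---matches the YA2011 Theorem~5 proof. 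Your identification of step~(c) as the crux is correct: it is exactly where the polytope/gap structure enters and where the $k\log T$ in (a) is downgraded to $\log T+(k-1)\log(\cdot)$. The eigenvalue bookkeeping you sketch (Rayleigh quotient at $v^\star$ for $\lambda_1$, trace for the rest, AM--GM on the remaining $k-1$ eigenvalues) is a clean way to execute it.

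In short: the proposal is correct and is essentially the same argument the paper defers to; since the paper itself offers no proof, your write-up is more detailed than what the paper contains.
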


\paragraph{Proof of Theorem \ref{thm:main2}.}
Start by noting that 
\begin{align}
{\sf FairRegret}_T=\sum_{t=1}^T R_t \leq \sum_{t=1}^T \frac{R_t^2} {\gamma}. \tag{$0 < \gamma \leq R_t$}
\end{align}
From Lemma \ref{lemma:confidenceset}, we know that $\mu^\star$ lies in $B_t^1$ with probability at least $1-\delta$. 
Hence, with probability at least $1-\delta$, we have:
\begin{align*}
\sum_{t=1}^T\frac{R_t^2}{\gamma} &\leq \sum_{t=1}^T\frac{4k\beta_t(\delta)}{\gamma}\norm{p^t}_{2, V_t^{-1}}^2 \tag{from Lemma \ref{lemma:dani8}}\\   
&\leq \frac{4k\beta_T(\delta)}{\gamma}\sum_{t=1}^T\norm{p^t}_{2, V_t^{-1}}^2 \tag{since $\beta_t$ is increasing with $t$}\\
&\leq \frac{8k\beta_T(\delta)}{\gamma}\log\det(V_T) \tag{from  the second ineq. in Lemma \ref{lemma:ya11}}.
\end{align*}
To conclude the proof of the theorem, we combine the bound on ${\sf FairRegret}_T$ with  Lemma \ref{lemma:appendixE} to give us the required upper bound on the RHS of the last inequality above:
\begin{align*}
&\sum_{t=1}^TR_t \leq \frac{8k\sigma^2}{\gamma}\bigg(\log T + (k-1)\log\frac{64\sigma^2}{\gamma^2}+\\
&  2(k-1)\log(k\log\left(1 + \nicefrac{T}{k}\right) + 2\log(\nicefrac{1}{\delta}))+ 2\log(\nicefrac{1}{\delta})\bigg)^2.
\end{align*}

\subsection{Proof of Theorem \ref{theorem:epsgreedy}}
\label{sec:proofs_epsgreedy}
Next, we give the proof of Theorem \ref{theorem:epsgreedy}.
\begin{proof}
	Let $v^\star = [v^\star_1,\cdots,v^\star_k]$ be the optimal probability.
	Conditioned on the history at time $t$, the expected regret of the $\epsgreedy$ at iteration $t$ can be bounded as follows 
	\begin{align*}
	R(t) &= \mu^{\star \top} v^\star - \left( (1-\epsilon_t)\mu^{\star \top} \bar{v}^t + \frac{\epsilon_t}{k}\sum_{a=1}^{k}\mu^\star_a \right) \\ &
	\leq (1-\epsilon_t) \mu^{\star \top} (v^\star - \bar{v}^t) + \mu^{\star \top}v^\star\epsilon_t \\&
	\leq   (1-\epsilon_t) \mu^{\star \top}v^\star 1\{\bar{v}^t \neq v^\star \} +\mu^{\star \top}v^\star \epsilon_t. 
	\end{align*}
	Let $n= \nicefrac{4}{(\eta d^2)}$. For $t\leq n$ we have $\epsilon_t=1$. The expected regret of the $\epsilon$-greedy is 
	\begin{align} \label{eq:regret_greedy}
	& \mathbb{E}\left[{\sf FairRegret}_T\right]  \leq \nonumber \\& \mu^{\star \top}v^\star \sum_{t=n+1}^{T} \mathbb{P}\{\bar{v}^t \neq v^\star \} + \mu^{\star \top}v^\star\sum_{t=1}^{T} \epsilon_t.
	\end{align}
	Let $\Delta \mu = \bar{\mu} - \mu^\star$. 
	Without loss of generality, let $\mu^{\star \top} v_i > \mu^{\star \top} v_j$ for any $v_i,v_j \in V(C)$ with $i<j$. 
	Hence, $v_1 = v^\star$. Let $\Delta_i = \mu^{\star \top} (v_1 - v_i)$.
	As a result $\Delta_2 = \gamma$.
	The event $\bar{v}^t \neq v^\star$ happens when $\bar{\mu}^\top_t v_i > \bar{\mu}^\top_t v_1$ for some $i>1$. 
	That is $(\mu^\star + \Delta \mu_t)^\top(v_i-v_1) = -\Delta_i + \Delta \mu_t^\top(v_i-v_1)\geq0$. As a result, we have
	\begin{align}
	\mathbb{P}\{\bar{v}^t \neq v^\star \} &= \mathbb{P}\{\bigcup_{v_i \in V(C) \backslash v_1  } \Delta \mu_t^\top(v_i-v_1) \geq \Delta_i \}  \nonumber\\&
	\leq \mathbb{P}\{\bigcup_{v_i \in V(C) \backslash v_1  } \|\Delta \mu_t\|_\infty \|v_i-v_1\|_1 \geq \Delta_i \} \label{eq:temp}\\&
	\leq \mathbb{P}\{\bigcup_{v_i \in V(C) \backslash v_1  } \|\Delta \mu_t\|_\infty \geq \frac{\Delta_i}{2} \} \nonumber\\&
	= \mathbb{P}\{\|\Delta \mu_t\|_\infty \geq \frac{\gamma}{2} \} \nonumber\\&
	= \mathbb{P}\{\bigcup_{j \in[k] } |\Delta \mu_{t,j}| \geq \frac{\gamma}{2} \}\nonumber\\&
	\leq \sum_{j\in [k]}\mathbb{P}\{ |\Delta \mu_{t,j} |\geq \frac{\gamma}{2} \} . 
	\end{align}
	In \eqref{eq:temp} we use Holder's inequality.
	Let $E_t = \eta \sum_{\tau=1}^{t} \nicefrac{\epsilon_t}{2} $ and
	let $N_{t,j}$ be the number of times that we have chosen arm $j$ up to time $t$.
	Next, we bound $\mathbb{P}\{| \Delta \mu _{t,j}|\geq \frac{\Delta_2}{2} \}$.
	\begin{align}
	&\mathbb{P}\{ |\Delta \mu_{t,j}| \geq \frac{\gamma}{2} \} \nonumber\\&
	= \mathbb{P}\{ |\Delta \mu_{t,j}| \geq \frac{\gamma}{2} | N_{t,j} \geq E_t\} \mathbb{P}( N_{t,j} \geq E_t) \nonumber \\&+ \mathbb{P}\{ |\Delta \mu_{t,j}|\geq \frac{\gamma}{2} | N_{t,j} < E_t\} \mathbb{P}( N_{t,j} < E_t) \nonumber \\ &
	\leq \mathbb{P}\{ |\Delta \mu_{t,j}|\geq \frac{\gamma}{2} | N_{t,j} \geq E_t\} +
	\mathbb{P}( N_{t,j} < E_t). \label{eq:temp2}
	\end{align}
	As $q_f \in \left\{q : B_{\infty}(q,\eta) \subset \mathcal{C} \right\}$, we have $q_{a,f} > \eta $. 
	Next, we bound each term of \eqref{eq:temp2}. First, using Chernoff-Hoeffding bound we have 
	\begin{equation} \label{eq:temp3}
	\mathbb{P}\{ |\Delta \mu_{t,j}|\geq \frac{\gamma}{2} | N_{t,j} \geq E_t\} \leq 2\exp(-\frac{E_t\gamma^2}{2}).
	\end{equation}
	Second, using Bernstein inequality we have
	\begin{equation} \label{eq:temp4}
	\mathbb{P}( N_{t,j} < E_t) \leq \exp(-\frac{E_t}{5}).	
	\end{equation}
	For $t\leq n$, $\epsilon_t=1$ and $E_t =  \eta t/ 2$. For $t > n$ we have
	\begin{align} 
	E_t = \frac{\eta \cdot n}{2} + \sum_{i=n+1}^{t} \frac{2 }{d^2i} &\geq\frac{2}{d^2} + \frac{2}{d^2}\ln(\frac{t}{n}) \nonumber \\&
	= \frac{2}{d^2}\ln(e\frac{t}{n}).\label{eq:temp5}
	\end{align} 
	By plugging \eqref{eq:temp3}, \eqref{eq:temp4} and \eqref{eq:temp5} in \eqref{eq:temp2} and noting that $\gamma <1/2$ we get
	\begin{align} \label{eq:temp6}
	\mathbb{P}\{ |\Delta \mu_{t,j}|&\geq \frac{\gamma}{2} \} \leq
	\left(\frac{n}{et}  \right)^{\frac{\gamma^2}{d^2}} +
	\left(\frac{n}{et}  \right)^{\frac{4}{10d^2}} \nonumber \\&
	\leq \left(\frac{n}{et}  \right) +
	\left(\frac{n}{et}  \right)^{\frac{4}{10d^2}} \leq 2\left(\frac{n}{et}  \right) .
	\end{align}
	Plugging \eqref{eq:temp6} in \eqref{eq:regret_greedy} yields
	\begin{align} \label{eq:regret_greedy2}
	&\mathbb{E}\left[{\sf FairRegret}_T\right]  \leq \nonumber \\& \mu^{\star \top}v^\star 
	\left(   (1+\frac{2n}{e}) \ln T + n \right).
	\end{align}
	By substituting $n = \nicefrac{4}{(\eta d^2)} $ in the regret above and noting that $\gamma\leq 2d$  we conclude the proof
	\begin{align} \label{eq:regret_greedy3}
	&	\mathbb{E}\left[{\sf FairRegret}_T\right]  \leq \nonumber \\& \mu^{\star \top}v^\star 
	\left(   \left(1+\frac{4}{\eta d^2}\right) \ln T + \frac{4}{\eta d^2} \right) = O\left( \frac{\log T}{\eta \gamma^2}  \right).
	\end{align}
\end{proof}

\subsection{Laminar Constraints} 
\label{sec:laminar}

In this section, we consider a laminar type of constraints. Let the Groups $G_1,\ldots,G_g \subseteq [k]$ be such that: $G_i \cap G_j \neq \emptyset$ implies $G_i \subseteq G_j$ or $G_j \subseteq G_i$.

In this case, the linear programming problem can be solved efficiently by a greedy algorithm.
The groups form a tree data structure, where the children are the largest groups that are subset of the parents.
For example in Figure~\ref{fig:Laminar}, the groups $G_1$ and $G_2$ are subsets of the arms $[k]$ and $G_1 \cap G_2 = \emptyset$. Similarly, the groups $G_3$ and $G_4$ are subsets of the group $G_1$ and $G_3 \cap G_4 = \emptyset$. $G_5$ is a subset of $G_2$.

\begin{figure}[t] 
	\centering
	\hspace{-.2in}
	\includegraphics[width=.6\linewidth]{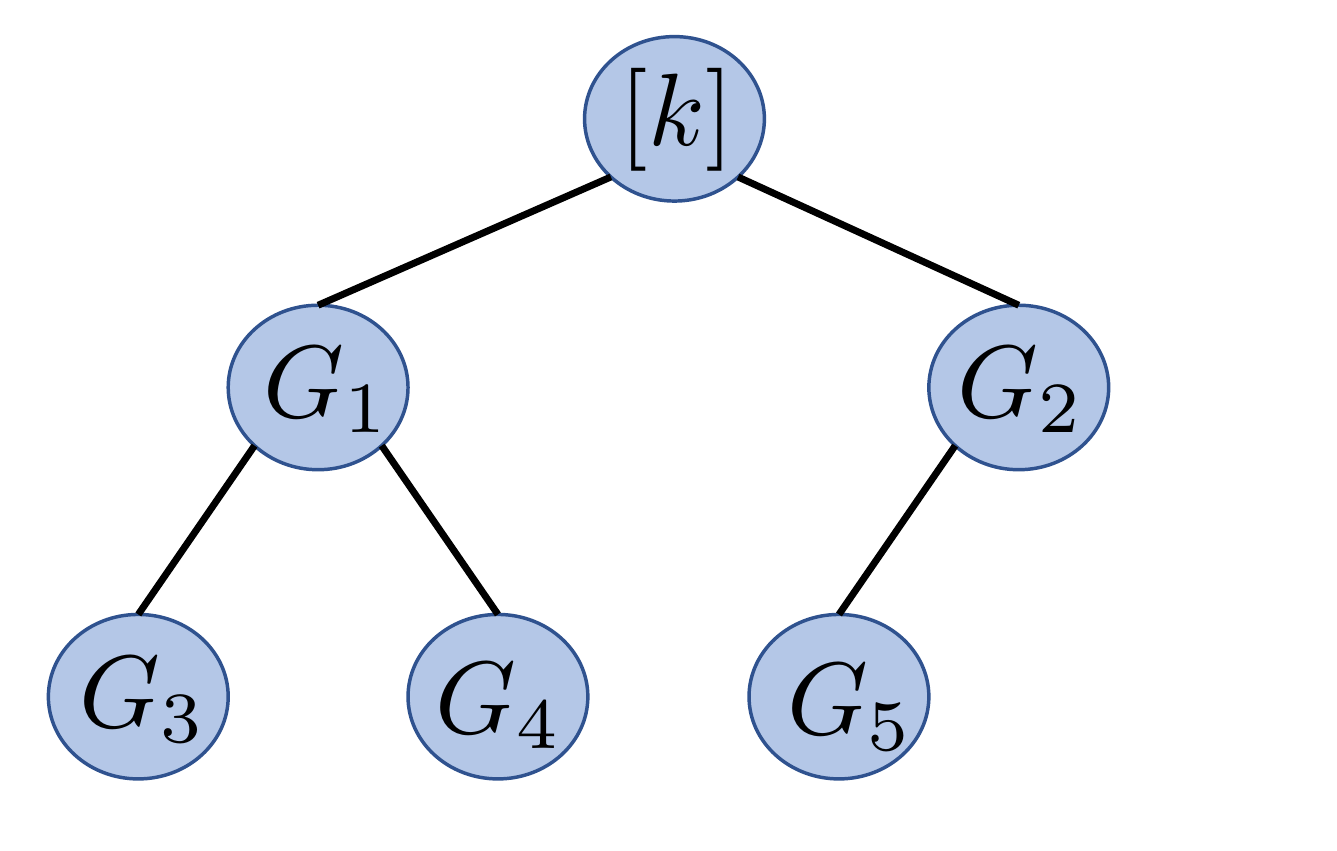} 	
	\caption{Laminar Group structure.}
	\label{fig:Laminar}
\end{figure}

If the lower bound $\ell_i$ for a group $G_i$ is smaller than the sum of the lower bounds for the children groups, then we increase it to the sum of of the lower bounds for the children groups.
Fo example in Figure~\ref{fig:Laminar}, if $\ell_1< \ell_3+ \ell_4$, then we increase it to $\ell_3+ \ell_4$. This is because satisfying the lower bound of $G_3$ and $G_4$ automatically satisfies the probability of $G_1$.
Similarly, if the upper bound $u_i$ for a group $G_i$ is larger than the sum of the lower bounds for the children groups, then we decrease it to the sum of of the upper bounds for the children groups.
For example in Figure~\ref{fig:Laminar}, if $u_1 > u_3+ u_4$, then we decrease it to $u_3+ u_4$. This is again because the total probability that an arm in group $G_i$ is selected cannot be larger than the upper bounds of its children.
This change of the upper and lower bounds does not change the optimum of the LP problem.   

In the greedy algorithm, first we satisfy the lower bounds, then we allocate the remaining probability such that the upper bounds are not violated.

In satisfying the lower bounds, we take a bottom-up approach.
We start from the leaves and satisfy the lower bound by giving the item to the arm $a$ with the largest reward in the group, i.e., $\argmax_{a \in G_i} \mu_a$. In our example, we set the probability of $\argmax_{a \in G_3} \mu_a$ to $\ell_3$,   the probability of $\argmax_{a \in G_4} \mu_a$ to $\ell_4$ and the probability of $\argmax_{a \in G_5} \mu_a$ to $\ell_5$.
Next, we proceed with satisfying the lower bound for the parents. In our example, we add the probability of $\ell_1- (\ell_3+\ell_4)$ to $\argmax_{a \in G_1} \mu_a$, and the probability of $\ell_2- \ell_5$ to $\argmax_{a \in G_2} \mu_a$.
We continue the process until no group remains infeasible. Finally, we assign the remaining probability to the arm with the largest reward.  In our example, we add the probability of $1- (\ell_1+\ell_2)$ to $\argmax_{a \in [k]} \mu_a$.

The remaining probability is first allocated to $\argmax_{a \in [k]} \mu_a$ until we reach the probablility for one of the upper bound constraints. 
Then, we eliminate the arms inside that group, and we allocate some probability to the arm with the maximum reward until another upper bound constraint is reached. 
We continue this process until either our distribution over arms is a probability distribution or we cannot allocate more probability to any arm without violating a constraint.

Let the probability that an arm from the group $G_i$ is selected be $\sum_{a\in G_i} v_a = q_i$ and the children of group $G_i$ be $G_{i1}, G_{i2},\ldots,G_{im}$. We denote the optimal allocation (subject to the constraints) at node $G_i$ be $OPT(G_i,q_i,\ell,u)$. Then, we have
\begin{align} \label{eq:temp_opt}
&OPT(G_i,q_i,\ell,u) = \\&\sum_{j=1}^{m} OPT(G_{ij},\ell_{ij},\ell,1) + OPT(G_i,q_i-\sum_{j=1}^{m}\ell_{ij},0,u). \nonumber
\end{align}
In satisfying the lower bounds (i.e., first term in \eqref{eq:temp_opt})
if we do not change the probability of selecting an arm inside a group $G_i$, i.e., $\sum_{a\in G_i} v_a$, then it does not effect the parent groups, hence we can locally optimize the problem beginning from the smaller groups.

We can show by contradiction that the procedure for allocating the remaining probability (i.e., second term in \eqref{eq:temp_opt}) is optimal.
This is because, if the probability of $\argmax_{a \in [k]} v_a$ can be increased without violating an upper bound or we can increase it by reducing another arm's probability of winning, then the current probability allocation is not optimal.

The running time of the algorithm is linear in the number of the arms $k$ and height of the tree.
Given that height of the tree is less than $g$, the total running time becomes $O(gk)$.

\section{Conclusion}

In this paper we initiate a formal study of incorporating fairness in bandit-based personalization algorithms.
We present a  general framework  that allows one to ensure that a fairness metric of choice attains a desired fixed value by providing appropriate upper and lower bounds on the probability that a given group of content is shown. 
We present two new bandit algorithms that perform well with respect to reward, improving the regret bound over the state-of-the-art. 
\EPS is particularly fast and we expect it to scale well in web-level applications.
Empirically, we observe that our \FAIR and \EPS algorithms indeed perform well; they not only converge quickly to the theoretical optimum, but this optimum, even for the tightest constraints (which attain a risk difference of 0) on the pessimistic arm values selected, is within a factor of 2 of the unconstrained rewards.

From an experimental standpoint, it would be interesting to explore the effect of the group structure in conjunction with the rewards structure on the tradeoff between fairness and regret.
Additionally, testing this algorithm in the field, in particular to measure user satisfaction given diversified news feeds, would be of significant interest. 
Such an experiment would give deeper insight into the benefits and tradeoffs between personalization and diversification of content, which could then be leveraged to set the appropriate rewards and parameters in our algorithm.

\bibliographystyle{plain}
\bibliography{references} 

\begin{thebibliography}{10}

\bibitem{YA2011}
Yasin Abbasi-Yadkori, D\'{a}vid P\'{a}l, and Csaba Szepesv\'{a}ri.
\newblock Improved algorithms for linear stochastic bandits.
\newblock In {\em Proceedings of the 24th International Conference on Neural
  Information Processing Systems}, pages 2312--2320, 2011.

\bibitem{agrawal2016linear}
Shipra Agrawal and Nikhil Devanur.
\newblock Linear contextual bandits with knapsacks.
\newblock In {\em Advances In Neural Information Processing Systems}, pages
  3450--3458, 2016.

\bibitem{agrawal2012analysis}
Shipra Agrawal and Navin Goyal.
\newblock Analysis of thompson sampling for the multi-armed bandit problem.
\newblock In {\em Conference on Learning Theory}, pages 39--1, 2012.

\bibitem{AllenZhuOrecchia}
Zeyuan Allen-Zhu and Lorenzo Orecchia.
\newblock Nearly-linear time positive lp solver with faster convergence rate.
\newblock In {\em Proceedings of the forty-seventh annual ACM symposium on
  Theory of computing}, pages 229--236. ACM, 2015.

\bibitem{auer2002finite}
Peter Auer, Nicolo Cesa-Bianchi, and Paul Fischer.
\newblock Finite-time analysis of the multiarmed bandit problem.
\newblock {\em Machine learning}, 47(2-3):235--256, 2002.

\bibitem{DisparateImpactBook}
Dan Biddle.
\newblock {\em Adverse Impact And Test Validation: A Practitioner's Guide to
  Valid And Defensible Employment Testing}.
\newblock 2006.

\bibitem{BanditBook}
S{\'e}bastien Bubeck and Nicolo Cesa-Bianchi.
\newblock Regret analysis of stochastic and nonstochastic multi-armed bandit
  problems.
\newblock {\em Foundations and Trends{\textregistered} in Machine Learning},
  5(1):1--122, 2012.

\bibitem{Calders2010}
Toon Calders and Sicco Verwer.
\newblock Three naive bayes approaches for discrimination-free classification.
\newblock {\em Data Mining and Knowledge Discovery}, 21(2):277--292, Sep 2010.

\bibitem{dani2008stochastic}
Varsha Dani, Thomas~P Hayes, and Sham~M Kakade.
\newblock {Stochastic Linear Optimization under Bandit Feedback}.
\newblock In {\em Proceedings of the 24th Annual Conference on Learning Theory
  (COLT)}, 2008.

\bibitem{datta2015automated}
Amit Datta, Michael~Carl Tschantz, and Anupam Datta.
\newblock Automated experiments on ad privacy settings.
\newblock {\em Proceedings on Privacy Enhancing Technologies}, 2015(1):92--112,
  2015.

\bibitem{Epstein2015}
Robert Epstein and Ronald~E Robertson.
\newblock The search engine manipulation effect (seme) and its possible impact
  on the outcomes of elections.
\newblock {\em Proceedings of the National Academy of Sciences},
  112(33):E4512--E4521, 2015.

\bibitem{farahat2012effective}
Ayman Farahat and Michael~C Bailey.
\newblock How effective is targeted advertising?
\newblock In {\em Proceedings of the 21st international conference on World
  Wide Web}, pages 111--120. ACM, 2012.

\bibitem{feldman2015certifying}
Michael Feldman, Sorelle~A Friedler, John Moeller, Carlos Scheidegger, and
  Suresh Venkatasubramanian.
\newblock Certifying and removing disparate impact.
\newblock In {\em Proceedings of the 21th ACM SIGKDD International Conference
  on Knowledge Discovery and Data Mining}, pages 259--268. ACM, 2015.

\bibitem{Forbes2017}
Thomas Fox-Brewster.
\newblock {Creepy Or Cool? Twitter Is Tracking Where You've Been, What You Like
  And Is Telling Advertisers}, May 2017.
\newblock Forbes Magazine.

\bibitem{goldfarb2011online}
Avi Goldfarb and Catherine Tucker.
\newblock Online display advertising: Targeting and obtrusiveness.
\newblock {\em Marketing Science}, 30(3):389--404, 2011.

\bibitem{HajianDiscriminationPrevention2013}
S.~Hajian and J.~Domingo-Ferrer.
\newblock A methodology for direct and indirect discrimination prevention in
  data mining.
\newblock {\em IEEE Transactions on Knowledge and Data Engineering},
  25(7):1445--1459, July 2013.

\bibitem{Hajian2014generalization}
Sara Hajian, Josep Domingo-Ferrer, and Oriol Farr{\`a}s.
\newblock Generalization-based privacy preservation and discrimination
  prevention in data publishing and mining.
\newblock {\em Data Mining and Knowledge Discovery}, 28(5):1158--1188, Sep
  2014.

\bibitem{HajianPrivacy2014}
Sara Hajian, Josep Domingo-Ferrer, Anna Monreale, Dino Pedreschi, and Fosca
  Giannotti.
\newblock Discrimination- and privacy-aware patterns.
\newblock {\em Data Mining and Knowledge Discovery}, 29(6):1733--1782, Nov
  2015.

\bibitem{joseph2016fairness}
Matthew Joseph, Michael Kearns, Jamie~H Morgenstern, and Aaron Roth.
\newblock Fairness in learning: Classic and contextual bandits.
\newblock In {\em Advances in Neural Information Processing Systems}, pages
  325--333, 2016.

\bibitem{kamiran2009classifying}
Faisal Kamiran and Toon Calders.
\newblock Classifying without discriminating.
\newblock In {\em Computer, Control and Communication}, pages 1--6. IEEE, 2009.

\bibitem{liu2010personalized}
Jiahui Liu, Peter Dolan, and Elin~R{\o}nby Pedersen.
\newblock Personalized news recommendation based on click behavior.
\newblock In {\em Proceedings of the 15th international conference on
  Intelligent user interfaces}, pages 31--40. ACM, 2010.

\bibitem{Mitchell2015}
Amy Mitchell, Jeffrey Gottfried, and Katerina~Eva Matsa.
\newblock {Facebook Top Source for Political News Among Millennials}, June
  2015.
\newblock Pew Research Center Report.

\bibitem{pandey2006handling}
Sandeep Pandey and Christopher Olston.
\newblock Handling advertisements of unknown quality in search advertising.
\newblock In {\em Advances in Neural Information Processing Systems}, pages
  1065--1072, 2006.

\bibitem{Ruggieri:2014:UTA:2870621.2870623}
Salvatore Ruggieri.
\newblock Using t-closeness anonymity to control for non-discrimination.
\newblock {\em Trans. Data Privacy}, 7(2):99--129, August 2014.

\bibitem{sakulkar2016stochastic}
Pranav Sakulkar and Bhaskar Krishnamachari.
\newblock Stochastic contextual bandits with known reward functions.
\newblock {\em arXiv preprint arXiv:1605.00176}, 2016.

\bibitem{sweeney2013discrimination}
Latanya Sweeney.
\newblock Discrimination in online ad delivery.
\newblock {\em Queue}, 11(3):10, 2013.

\bibitem{yang2016measuring}
Ke~Yang and Julia Stoyanovich.
\newblock Measuring fairness in ranked outputs.
\newblock In {\em Proceedings of the 29th International Conference on
  Scientific and Statistical Database Management}, SSDBM '17, pages 22:1--22:6,
  New York, NY, USA, 2017. ACM.

\bibitem{pmlr-v54-zafar17a}
Muhammad~Bilal Zafar, Isabel Valera, Manuel~Gomez Rogriguez, and Krishna~P.
  Gummadi.
\newblock {Fairness Constraints: Mechanisms for Fair Classification}.
\newblock In Aarti Singh and Jerry Zhu, editors, {\em Proceedings of the 20th
  International Conference on Artificial Intelligence and Statistics},
  volume~54 of {\em Proceedings of Machine Learning Research}, pages 962--970,
  Fort Lauderdale, FL, USA, 20--22 Apr 2017. PMLR.

\bibitem{zhang2005bayesian}
Yi~Zhang.
\newblock Bayesian graphical models for adaptive filtering.
\newblock In {\em PhD Thesis}, 2005.

\end{thebibliography}


\appendix

\section{Discrimination Metrics}
\label{sec:metrics-app}

A wide variety of discrimination metrics, in particular with relation to classification, have been studied in the literature. 
In this section we take a look at some such prevalent metrics and show that the constraints are general enough to encapsulate them. 

Formally, fairness metrics are defined for a sets of items or people $[k]$, and a group $G \subsetneq [k]$ of items that have a sensitive attribute (e.g., a particular gender or ethnicity). 
Now, consider any classifier $\chi: \mathcal [k] \to \{1, 0\}$. {In general, one thinks of $\chi(\cdot) = 1$ as being a ``positive'' outcome (i.e., a loan is given or bail approved).}
Fairness metrics measure how the presence or absence of a sensitive attribute affects the probability of a positive outcome.
To translate this to our setting, $[k]$ is the set of content, and $G_i \subsetneq [k]$ is a protected group, e.g., news articles about a minority opinion, ads for high-paying jobs, or paid content on a news site. 
Now, let $\chi_p: [k] \to  \{1, 0\}$ be a probabilistic classifier defined by a distribution $p \in [0,1]^k$ such that for any $a \in [k]$, we have $\chi_p(a) = 1$ with probability $p_a$ and $0$ otherwise (thus, $\mathbb E[\chi_p(a)] = p_a$).
In effect, this classifier encodes the ``positive'' outcome of being selected given distribution $p$.
Fairness metrics can now be defined for this classifier, and we give some examples below.

The $x$\% rule (often 80\% in legal rulings \cite{DisparateImpactBook})  
for measuring disparate impact states that the ratio between the percentage of items having a certain sensitive attribute value assigned the positive decision outcome and the percentage of items not having the value also assigned the positive outcome should be at least $\nicefrac {x}{100}$ (see, e.g., \cite{pmlr-v54-zafar17a, feldman2015certifying}). 
This metric can be translated to our setting as follows: 
Given $x$, we say that a distribution is $x$\% fair with respect to group $i$ if
\[ \frac{x}{100} \leq 
\frac{\sum_{a \in G_i} p_a } {\sum_{a \not\in G_i} p_a  }.\]
Thus, a content selection algorithm could be deemed $x$\% fair if the distribution $p^t$ over content is $x$\% fair with respect to all groups $i$ for all $t$.
Given $x$, if we set $\ell_i=\frac{x}{100+x}$, and $u_i$ to be any value more than $\ell_i$, the constraints of the form 
$   \ell_i \leq \sum_{a \in G_i} p_a^t \leq u_i $
ensure that 
$$\frac{x}{100} \leq \frac{\sum_{a \in G_i} p_a } {\sum_{a \not\in G_i} p_a  },$$ implying  $x$\% content-fairness. 
In fact, our framework also allows us to have a different $x$ for each group.

Many other fairness metrics can be defined with respect to the probability of a negative outcome for the group with the sensitive attribute ({$\nu_G$}), the group without the sensitive attribute ({$\nu_{\neg G}$}) and the overall dataset ({$\nu$}); e.g., \cite{HajianDiscriminationPrevention2013, HajianPrivacy2014,Hajian2014generalization}. 
Examples include \emph{selection lift} ($s_{\mbox{lift}} =  \frac{\nu_{G}}{\nu_{\neg G}}$), \emph{extended lift} (${e_{\mbox{lift}}} = \frac{\nu_{G}}{\nu}$), and \emph{odds lift} (${o_{\mbox{lift}}} = \frac{\nu_{G}(1-\nu_{\neg G})}{\nu_{\neg G}(1-\nu_{G})}$). 
We say that the classifier $C_p$ is $\beta-{\mbox{lift}}$-fair with respect to group $i$, or more generally, the distribution $p$ is   $\beta-{\mbox{lift}}$-fair with respect to group $i$ if ${\mbox{lift}} \leq \beta$ (for any of the above definitions of $\cdot_{\mbox{lift}}$). 
Given the constraints  
$    \ell_i \leq \sum_{a\in G_i}p_a^t  \leq u_i, $

$\beta$-lift fairness can be ensured for selection lift, if we set $u_i,\ell_i$ such that 

$ \frac{u_i}{\ell_i} \leq \beta$
for all $i$. 
Similarly, appropriate conditions  can be derived for extended and odds lift.

A different class of discrimination metrics measure the additive as opposed to multiplicative functions of the values $\nu_{G}, \nu_{\neg G}$, and $\nu$ defined above (see, e.g., \cite{Calders2010},\cite{Ruggieri:2014:UTA:2870621.2870623}). For example, {\em risk difference} ($RD:= \nu_{G} - \nu_{\neg G}$) and {\em extended difference} (${ED} := \nu_{G} - \nu$). 
Given the constraints, we have $\{RD, ED\} \leq (u_i-\ell_i)$ for all $i$. Thus, given $\beta>0$, we can ensure that their value is always less than $\beta$ by setting the appropriate values of $\ell_i$ and $u_i$.
We explore the effect on the cumulative reward of guaranteeing a fixed value of $RD$ in Section~\ref{sec:experiments}.

\end{document}